\title{\LARGE \bf
$n$-Step Temporal Difference Learning with Optimal $n$ %using Simulation Optimization Model-Free Simultaneous Deterministic Perturbation Stochastic Approximation
}
\author{Lakshmi Mandal$^{\dagger}$ and Shalabh Bhatnagar$^{\dagger}$% <-this % stops a space
\thanks{This work was supported by a J.C. Bose Fellowship, Project No.~DFTM/02/3125/M/04/AIR-04 from DRDO under DIA-RCOE, the Walmart Centre for Tech Excellence, IISc, and the RBCCPS, IISc.}
%a project from DST under the ICPS scheme and RBCCPS,IISc.}% <-this % stops a space
\thanks{$^{\dagger}$ The authors are with the Department of Computer Science and Automation,
        Indian Institute of Science, Bangalore 560012, India. E-mail:
        {\tt\small $\{$lmandal, shalabh$\}$@iisc.ac.in}}%
%\thanks{$^{2}$Shalabh Bhatnagar is with Department of Computer Science and Automation, 
%Indian Institute of Science, Bengaluru 560012, India 
%and also with the Department of Robert Bosch Centre for Cyber-Physical Systems,
%Indian Institute of Science, Bengaluru 560012, India
%        {\tt\small shalabh@iisc.ac.in}}%
}
\newtheorem{theorem}{Theorem}
\newtheorem{lemma}[theorem]{Lemma}
\newtheorem{proposition}{Proposition}
\newtheorem{definition}{Definition}
\newtheorem{remark}{Remark}
\begin{document}

\maketitle
\thispagestyle{empty}
\pagestyle{empty}

%%%%%%%%%%%%%%%%%%%%%%%%%%%%%%%%%%%%%%%%%%%%%%%%%%%%%%%%%%%%%%%%%%%%%%%%%%%%%%%%
\begin{abstract}
We consider the problem of finding the optimal value of $n$ in the $n$-step temporal difference (TD) learning algorithm. %This problem is formulated as an optimization problem in this paper. The solution to this problem, that is $n^*$, starting from an initial value of $n$ corresponds to the optimal value of $n$. 
Our objective function for the optimization problem is the average root mean squared error (RMSE).
We find the optimal $n$ by resorting to a  model-free optimization technique involving a one-simulation simultaneous perturbation stochastic approximation (SPSA) based procedure. Whereas SPSA is a zeroth-order continuous optimization procedure, we adapt it to the discrete optimization setting by using a random projection operator. 
We prove the asymptotic convergence of the recursion by showing that the sequence of $n$-updates obtained using zeroth-order stochastic gradient search converges almost surely to an internally chain transitive invariant set of an associated differential inclusion. This results in convergence of the discrete parameter sequence to the optimal $n$ 
in $n$-step TD. Through experiments, we show that the optimal value of $n$ is achieved with our SDPSA algorithm for arbitrary initial values. We further show using numerical evaluations that SDPSA outperforms the state-of-the-art discrete parameter stochastic optimization algorithm `Optimal Computing Budget Allocation (OCBA)' on benchmark RL tasks.
\end{abstract}

\section{Introduction}
Reinforcement learning (RL) algorithms are widely used for solving problems of sequential decision-making under uncertainty. An RL agent typically makes decisions based on data that it collects through interactions with the environment in order to maximize a certain long-term reward objective \cite{Bertsekas_1995,RL_Book, bertsekas2019reinforcement, meyn2022control}. Because of their model-free nature, RL algorithms have found extensive applications in various areas such as operations research, game theory, multi-agent systems, autonomous systems, communication networks and adaptive signal processing. Various classes of procedures such as the action-value methods, evolutionary algorithms, and policy gradient approaches are available for finding solutions to RL problems \cite{2012_Marco}. A widely popular class of approaches is the action-value methods that solve an RL problem by learning the action-value function under a given policy that is then used to design a better policy. 

Monte Carlo (MC), and temporal-difference (TD) learning are two popular model-free action-value methods \cite{RL_Book} that have been explored for their theoretical guarantees as well as in applications.
%While dynamic programming is a model-based approach, MC and TD methods are purely model-free and data-driven approaches and as a result, have been thoroughly studied in the literature. 
For example, the MC gradient method utilizing the sample paths is used to compute the policy gradient in \cite{Zhang2020}. Furthermore, the MC method is applied for RL-based optimal control of batch processes in \cite{Yoo2021}. A finite time analysis of a decentralized version of the TD(0) learning algorithm is discussed in \cite{Sun2020}. %Furthermore, the TD(0) method, in combination with linear function approximation, is used for policy evaluation in \cite{Sun2021}. 
A variance-reduced TD method is investigated in \cite{Xu2020} that is seen to converge to a neighborhood of the fixed-point solution of TD at a linear rate. More recently, MC and TD algorithms based on function approximation using neural network architectures are being used as building blocks for deep reinforcement learning (Deep RL) \cite{Hoel2020}. In terms of adaptive algorithms, \cite{Shokri2019} presents an adaptive fuzzy approach combining eligibility traces with off-policy methods with decisions on whether or not to apply eligibility traces made during the exploration phase. This is however significantly different from the problem that we consider. %problem and the problem setup in \cite{Shokri2019} is different from our considered problem.   

It is to be noted that, in most cases, there is no proper justification for why only regular TD, also called 1-step TD, is used and not its $n$-step  generalization for $n>1$. The $n$-step TD (for $n> 1$) tends to fall in between MC methods and 1-step TD (also called TD(0)). This is because MC methods are full-trajectory approaches that perform value function updates only when the entire trajectory becomes available to the decision maker. The 
$n$-step TD methods combine advantages of both MC and TD requiring $n$ state transitions before beginning to perform updates. It is empirically observed in Chapter 7 of \cite{RL_Book} that different values of $n$ in general result in different performance behaviors in terms of the root mean squared error (RMSE). Thus, a natural question that arises is what value of $n$ should one use in a given problem setting and for a given choice of the step-size parameters. However, finding the optimal $n$ for various problem settings is challenging and a priori an optimal $n$ for any setting cannot be found directly. We propose in this paper, a two-step systematic procedure based on zeroth-order stochastic gradient search (though in the space of discrete parameters) that is purely data-driven and provides the optimal value of $n$ in $n$-step TD.

Methods for solving nonlinear (continuous) optimization problems include the Newton-Raphson technique, steepest descent, and quasi-Newton algorithms. %\cite{bertsekas99}. 
Most stochastic optimization algorithms are stochastic variants of their aforementioned deterministic counterparts  \cite{bhatnagar-book, Maryak2001}. 
In cases where the objective function gradient is unknown and only noisy objective function estimates are available (a setting previously considered by Robbins and Monro \cite{rm} in a general context), Kiefer and Wolfowitz \cite{kw} presented a stochastic optimization procedure based on finite-difference gradient approximation that requires two function measurements or simulations for a scalar parameter. %When $d$ is high, such an approach would require an exorbitant amount of computation rendering it inapplicable.
In the case of vector parameters, random search procedures such as simultaneous perturbation stochastic approximation (SPSA) \cite{Spall1992}, \cite{Spallt1997} and smoothed functional (SF) algorithms \cite{bhatnagar2007} have been found to be effective as the algorithms here only require one or two system simulations regardless of the parameter dimension.

Amongst discrete optimization methods for moderate-sized problems, ranking and selection approaches are found to be efficient \cite{bechhofer}. The optimal computing budget allocation (OCBA) is one such procedure that is based on Bayesian optimization \cite{chenlee} and is considered to be the best amongst ranking and selection procedures.
In \cite{Bhatnagar2011}, a stochastic discrete optimization problem is considered and two-simulation SPSA as well as SF algorithms, originally devised for problems of continuous (stochastic) optimization have been adapted to the case of discrete (stochastic) optimization. %\textcolor{blue}{It has been found from detailed numerical experiments in \cite{Bhatnagar2011} that both SPSA/SF perform empirically better than the OCBA algorithm both in terms of accuracy and computational effort needed by the procedures.}
%SPSA-based optimization can be categorized as continuous and discrete optimization. In the literature, continuous optimization is studied using random perturbation in SPSA, whereas discrete optimization is studied by using deterministic perturbation in SPSA \cite{Bhatnagar2011}.
%We mention here that one-simulation SPSA with a  deterministic perturbation sequence has not been used previously for discrete (stochastic) optimization. 
%Amongst other related work, Hadamard matrices (HM) have been found useful in other machine learning applications.
%The authors in \cite{Koutaki2018} have recently used HM for supervised discrete hashing (SDH) and found that the SDH’s training time becomes 170 times faster and the testing time, including encoding time, becomes seven times faster than the conventional SDH. Furthermore, HM is applied to online hashing in \cite{Lin2018,Lin2020}, and it is found that the proposed method outperformed various state-of-the-art methods in terms of accuracy and efficiency.
%Since the discrete optimization procedure we have is only used to update a scalar parameter ($n$), the perturbations we use are cyclic $\pm 1$-valued. In fact an HM construction would also result in the same perturbation sequence that we use. HM constructions have been found to be particularly useful in the case of algorithms involving vector-valued updates.

We present in this paper an adaptive zeroth order SPSA-type algorithm except that the perturbations are deterministic, cyclic and $\pm 1$-valued. This requires only one simulation at any parameter update unlike regular SPSA that requires two simulations. 
A general treatment on deterministic perturbation SPSA is provided in \cite{bhatnagar2003}. %the simultaneous deterministic perturbation stochastic approximation (SDPSA) optimization algorithm for discrete (stochastic) optimization that we specifically use for finding the optimal value of ``$n$'' in $n$-step TD starting from an arbitrary initial value of $n$, as this falls in the realm of a discrete optimization problem. 
Our algorithm is a two-timescale stochastic approximation procedure where on the slower timescale we update the value of $n$ and on the faster scale, the {long-run average variance of the $n$-step TD estimator is obtained. In addition, the value function updates using $n$-step TD are performed on the faster timescale as well, that in turn feed into the long-run variance update step.} We prove the convergence of this scheme to the optimal $n$ that minimizes the long-run variance in $n$-step TD.

A significant challenge with the convergence analysis of the proposed scheme is that the underlying ordinary differential equation (ODE) is found to have a discontinuous RHS making the standard ODE approach to stochastic approximation inapplicable. This is because for an ODE based analysis to work, one requires the RHS of the ODE to be Lipschitz continuous. We work around this problem by suitably identifying a differential inclusion (DI) instead and showing that the stochastic recursion asymptotically tracks such a DI.
We finally show the results of experiments on two different RL environments, namely the Random Walk (RW) and Grid World (GW), where we observe that our scheme converges in each case to the optimal value of $n$ regardless of the initial choice of the same. The contributions of our work are summarized as follows.
 \begin{enumerate}
     \item We propose a Simultaneous Deterministic Perturbation Stochastic Approximation (SDPSA) algorithm that incorporates a one-simulation discrete optimization procedure but with deterministic perturbation sequences (for improved performance). This is unlike regular SPSA that works on continuous settings and with  random perturbations.
     \item The proposed SDPSA algorithm is a two-timescale procedure that we employ to find the optimal value of $n$ in $n$-step TD, starting from any arbitrary initial value of the same. We show that the gradient estimation procedure results in regular bias cancellation from the use of deterministic perturbations. 
     %\item SDPSA is useful in finding the pair of the optimal $n$ and its corresponding value function to which it converges in a given setting, as the same $n$ is not expected to work in every setting.
     \item 
     We prove the asymptotic convergence of the SDPSA algorithm
     by showing that the recursion tracks the limit points of an associated differential inclusion (that we identify precisely), which we show demonstrates that SDPSA finds the optimal $n$ that minimizes the long-run variance in $n$-step TD.
     \item Our experimental results conducted on two different environments uniformly show that the optimal value of $n$ is achieved with SDPSA for any arbitrary initial value of the same\footnote{{The code for our experiments is available at  \url{https://github.com/LakshmiMandal/n_stepTD_opt_n.}}}. 
     \item We also show numerical comparisons of our algorithm SDPSA with the discrete parameter stochastic optimization algorithm OCBA. The latter (as mentioned previously) is widely recognized as the best ranking and selection procedure that makes efficient use of the available simulation budget. Our algorithm shows better results than OCBA in terms of both RMSE as well as computational time performance.
    % \item To the best of our knowledge, ours is the first work to propose a systematic procedure for finding the optimal value of $n$ in $n$-step TD. 
 \end{enumerate} 
Finally, we mention that in recent times, there is a lot of research activity on finite-time (non-asymptotic) analyses of reinforcement learning algorithms, see for instance, \cite{galdalal, wuetal}, that are based on obtaining probabilistic concentration bounds on the mean-squared error. Such analyses however rely on strong regularity conditions on the driving vector field and in particular, one requires the same to be Lipschitz continuous, see \cite{galdalal, wuetal}. As mentioned before, such conditions are clearly violated in our setting since our objective function derivative is in fact discontinuous. Hence, we treat the algorithm as a stochastic recursive inclusion whose analysis we carry out by forming an underlying DI. To the best of our knowledge, there is no work in the literature that deals with finite time bounds for such algorithms. In fact, as already mentioned, even performing an ODE-based asymptotic analysis is impossible precisely for the aforementioned lack of regularity conditions. Thus, an important feature of our work is that we identify a suitable DI for our system and then prove the asymptotic convergence of our algorithm to the limit points of this DI (without requiring the aforementioned regularity conditions).

 %\vspace*{-9pt}
\section{The Proposed Methodology}
\label{methodology}

This section describes our proposed SDPSA algorithm (Algorithm \ref{SPSA}) for determining the optimal $n$ for the $n$-step TD (TD($n$)) algorithm (Algorithm \ref{n_step_TD}) (see Figure  \ref{fig:flowchart}).
% A modular structure of the proposed SDPSA algorithm is created, which is presented in \Cref{fig:flowchart}, to facilitate better understanding of how the algorithm works and is implemented. For better understanding, firstly TD($n$) algorithm (see \Cref{n_step_TD}) is explained and then the proposed SDPSA algorithm (see \Cref{SPSA}) is presented. The TD($n$) algorithm, for a given value of $n$, is seen to converge to the solution of the $n$-step Bellman equation for a given policy.

\begin{figure}[htb]
    \centering
    \vspace{-8pt}
    \includegraphics[width=3.5in]{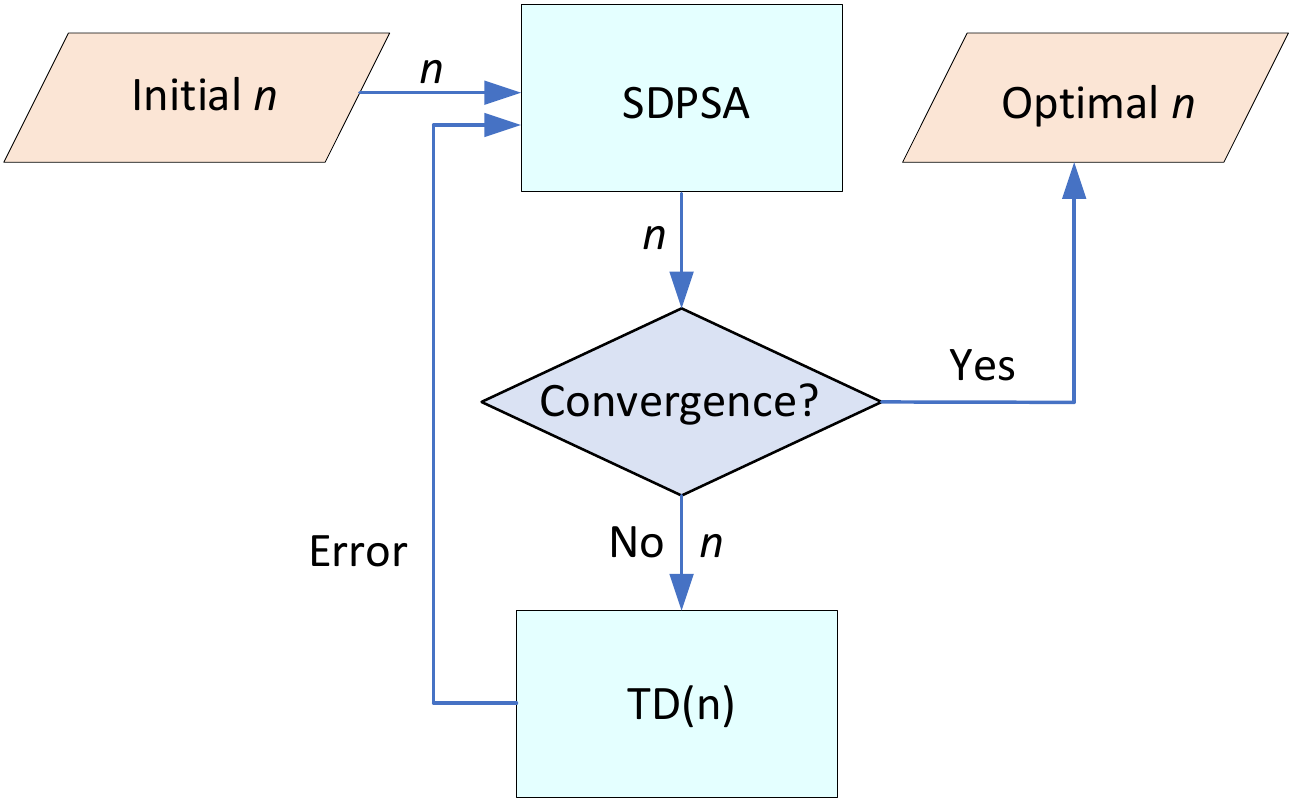}
    \caption{Flowchart of the proposed algorithm.}
    \label{fig:flowchart}
    % \vspace{-14pt}
\end{figure}

\subsection{$n$-step TD (TD($n$)) Algorithm}
\begin{algorithm}[htb]
\caption{TD($n$) Algorithm} \label{n_step_TD}
\textbf{Input:} policy $\pi$, $n$, $\alpha$, initial  $V_n(s), \forall s \in S$, $J(n)=0$.\\
\textbf{Output:} Converged value of $V_n(s)$ $ \forall s \in S$ and $J(n)$.
For {$e$ in episodes:}{

~~~~Initialize: $S_0$ and set as non-terminal state
    
~~~~Termination time, $T \leftarrow \infty$

~~~~For {$i=0,1,2,\dots$}{

~~~~~~~~If {$i < T :$}{

~~~~~~~~~~~~Action $\sim \pi(\cdot \mid S_i)$ 
            
~~~~~~~~~~~~{Observe reward $R_{i+1}$, next state $S_{i+1}$} 

~~~~~~~~~~~~If {$S_{i+1}$ is terminal state:}{

~~~~~~~~~~~~~~~~$T \leftarrow i+1$}
~~~~~~~~}

~~~~~~~~$\kappa \leftarrow i-n+1$

~~~~~~~~If {$\kappa \geq 0:$}{

~~~~~~~~~~~~${\hat V_n}(S_\kappa)\leftarrow \sum_{j=\kappa+1}^{\min (\kappa +n ,T)} \gamma^{j-\kappa-1}R_j $

~~~~~~~~~~~~If {$\kappa +n < T:$}{

~~~~~~~~~~~~~~~~${\hat V_n}(S_\kappa) \leftarrow {\hat V_n}(S_\kappa) + \gamma^n V_n(S_{\kappa + n})$}

~~~~~~~~~~~~$V_n(S_{\kappa}) \leftarrow V_n(S_{\kappa})+ \alpha[{\hat V_n}(S_\kappa)-V_n(S_{\kappa})]$

~~~~~~~~~~~~{$g_n(S_{\kappa})\leftarrow (\hat V_n(S_{\kappa})-V_n(S_{\kappa}))^2$}
            
~~~~~~~~~~~~$J(n) \leftarrow J(n) + \alpha[g_n(S_\kappa) - J(n)]$
            }
        }

~~~~Until $\kappa = T-1$}

\end{algorithm}

Our proposed algorithm is a two-timescale stochastic approximation scheme wherein the TD($n$) recursion runs along the faster timescale for any prescribed value of $n$ that in-turn gets updated on the slower timescale along a gradient descent direction. The basic underlying setting is of a Markov decision process (MDP), which is defined via the tuple $<S; A; P; R; \gamma>$, where 
$S$ denotes the state space, $A$ is the action space, $P$ is the transition probability function, i.e., $P:S \times S \times A \rightarrow [0,1]$, $R$ denotes the reward function, and $\gamma \in (0,1)$ denotes the discount factor, respectively. The state and action spaces are assumed here to be finite.

We now describe the TD($n$) algorithm (see Algorithm \ref{n_step_TD}). %Recall that the TD($n$) algorithm deals with the problem of prediction, i.e., it estimates the value function of a given policy. 
Here, a policy $\pi$, which maps the state space $S$ to the action space $A$, is input to the algorithm. The {input} parameters of Algorithm \ref{n_step_TD} are a positive integer $n$ and a step-size $\alpha \in (0,1]$. %The quantity $n$ defines the number of steps in TD($n$) to wait after which an update to the value of the state at the given instant is performed. 
The value function {$V_n(\cdot)$, for a given $n$}, for each state $s \in S$ is initialized arbitrarily. {Here $R_t\stackrel{\triangle}{=} R(S_{t-1},\pi(S_{t-1}), S_t)$ is the reward obtained at instant $t$}. %The state–reward sequence obtained under $\pi$ is then denoted as $S_0$, $R_{1}$, $S_{1}$, $R_{2}$, $S_2$, $\dots$, $R_T$, $S_T$, where $T$ is the termination time of the episode. Note that the procedure here works in the case of both stochastic shortest path (i.e., episodic) problems as well as infinite horizon discounted reward problems. In the latter case, $T$ can denote some a priori chosen termination time for each episode. %as infinitely long trajectories cannot be generated in practice and moreover, because of the discount factor, trajectories that are very large in length would not be required for the procedure.
The $n$-step return of TD($n$) for the state visited at instant $\kappa$ in the trajectory is denoted by ${\hat V_n}(S_\kappa)$, where for $\kappa=i$,
%\vspace{-12pt}
\begin{equation}
\label{hatV}
%\vspace{-7pt}
{\hat V_n}(S_i) \doteq \sum_{j=1}^n \gamma ^{j-1}R_{i+j} +\gamma^n V_n(S_{i+n}),
\end{equation}
$\forall n\geq 1$ and $0 \leq i< T-n$. In this algorithm, ${\hat V_n}(S_i)$ is the $n$-step return used in TD($n$) counted from instant $i$ when the state is $S_i$. The quantity $V_{n}(S_{i+n})$ denotes the estimate of the value of state $S_{i+n}$. %that is available at instant $i+n-1$ 
%provided the given trajectory hasn't terminated until instant $i+n$. 
%If, on the other hand, the trajectory terminates before $i+n$, the return ${\hat V_n}(S_i)$ becomes the Monte-Carlo return, cf.~Chapter 7 of \cite{RL_Book}.
Note that the expected value of $\hat{V}_n(S_i)$ given state $S_i$ is the true value of state $S_i$ under the given policy. 
The error term $g_n(\cdot)$ and its long-term average $J(n)$ or long-run mean-squared error (MSE) are explained in the subsequent section.
\begin{figure}[htb]
    \centering
    \includegraphics[width=4in]{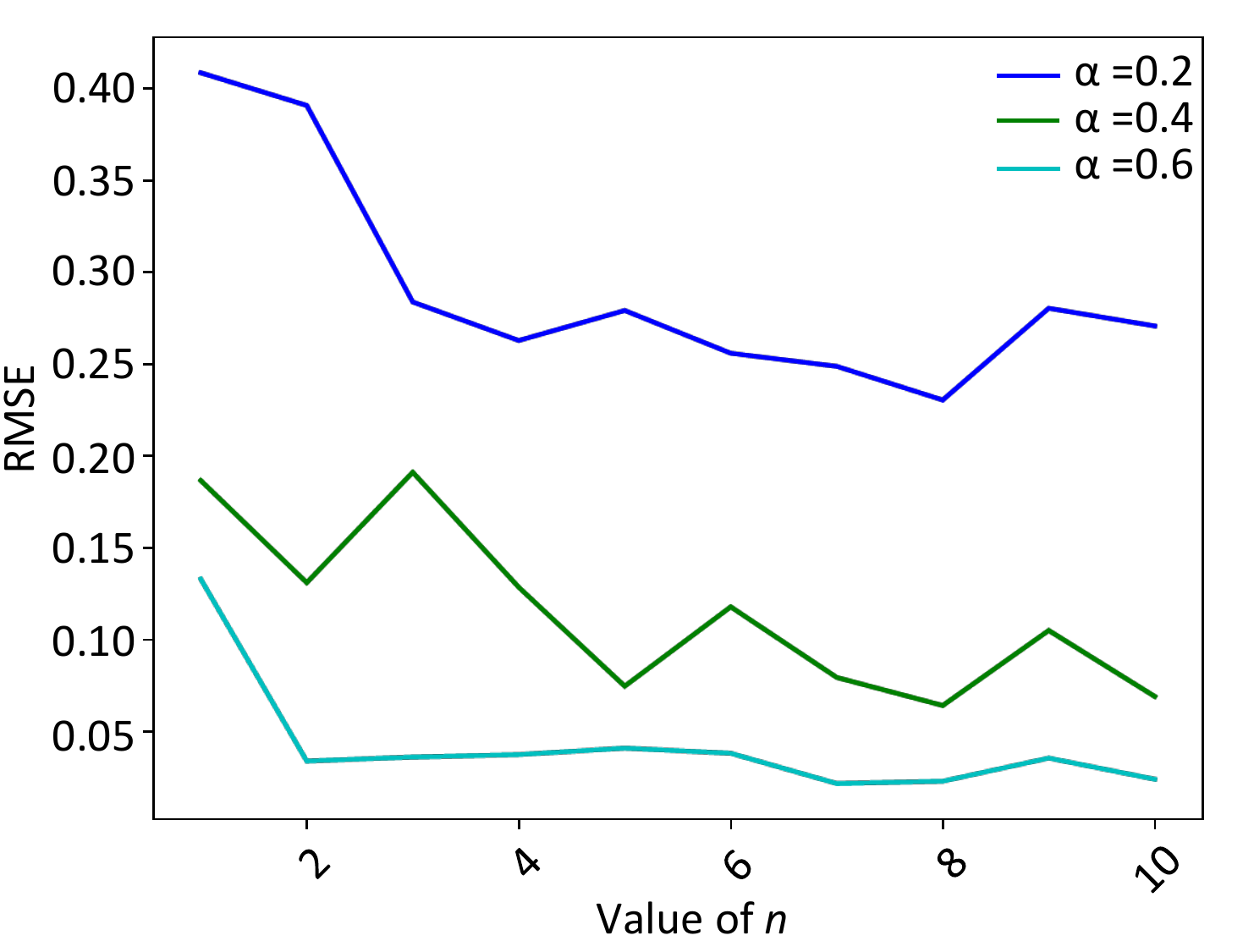}
    \caption{Obtained RMSE for different values of $n$ and fixed $\alpha$ on RW.}
    \label{fig:converged_RMSE_diff_n}
    \vspace{-3pt}
\end{figure}
\begin{figure}[htb]
    \centering
    \includegraphics[width=4in]{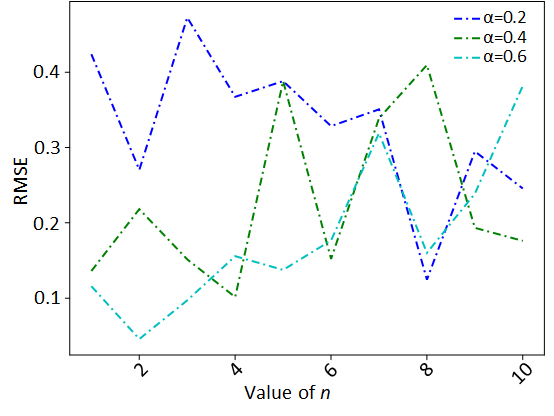}
    \caption{Obtained RMSE for different values of $n$ and fixed $\alpha$ on GW.}
    \label{fig:converged_RMSE_diff_n_app2}
    % \vspace{-3pt}
\end{figure}

In Figs.~  \ref{fig:converged_RMSE_diff_n} and \ref{fig:converged_RMSE_diff_n_app2}, we show the effect of $n$ and $\alpha$ on the root mean squared error (RMSE) obtained upon convergence of Algorithm \ref{n_step_TD} for each value of $n$. These are shown for two different RL benchmark environments, namely the Random Walk (RW) and the Grid-world (GW), respectively, see  Section \ref{exp_results} for details. It is evident from these figures that the RMSE does not exhibit a monotonic behaviour and thus one cannot a priori predict the optimal value of $n$. These plots will also be seen to confirm the correctness of the results obtained by our proposed algorithm in Section \ref{exp_results}.

\subsection{The Proposed SDPSA Algorithm}
% \vspace{-10pt}

%For improved clarity, the framework of the proposed SDPSA algorithm is discussed first and then the algorithm is explained. 
For our algorithm, we consider a deterministic sequence of perturbations whereby the perturbation variable $\Delta$ used to perturb the parameter is set equal to $+1$ on every even iteration ($k=0,2,4,\ldots$) and to $-1$ on every odd iteration ($k=1,3,5,\ldots$). 
%Thus, the proposed algorithm is named SDPSA %(where DP stands for deterministic perturbation).
Note that the state-valued process $\{S_{m}\}$ is a discrete Markov process that does not depend on the parameter $n$ as it only impacts the value estimation procedure and not the state evolution. Note that $n \in D \subset \mathcal{Z}^+$, a finite set of positive integers. %Let $D=\{d^0,\dots,d^N\}$, with $d^0<d^1< \dots <d^N$, where, $D_{min} := d^0$ and $D_{max}:= d^{N}$. Now, let $D$ is known and contains $l$ points 
Thus, let $D=\{1,2,\dots,L\}$ for some $L$ large enough and let $\bar D$ denote the closed convex hull of $D$, viz., $\bar D=[1, L]$. The SDPSA algorithm updates the parameter $n$ within the set $\bar D$. For $x\in \bar{D}$, $\Gamma(x) \in D$ is the projection of $x$ to $D$.

We assume that the state-process $\{S_m\}$ is ergodic Markov and takes values in the state space $S=\{0,1,\dots,|S|\}$, $|S|<\infty$. We denote the transition probability from state $x$ to state $y$ for the process $\{S_m\}$ to be $p_{xy}$, $x,y \in S$. Let $g_n(S_i)\dot{=}({\hat V_n}(S_i)-V_n(S_i))^2$ be the associated single-stage cost when the parameter value is $n$ and the state is $S_i$ at instant $i$. 
%Thus, we consider
%the ``long-run" mean-squared error $J(\cdot)$ as the objective function to
%minimize, which is the exponentially weighted moving average of the
%variance of $V_n$.
Thus, we consider the long-run mean-squared error $J(\cdot)$ or the long-run variance of $\hat V_n$, as the objective function to minimize. %i.e., an $n$-step TD update is run, see Algorithm \ref{n_step_TD}, whose long-run average we denote by $J(\cdot)$. 
A desired global minimizer (i.e., $n^* \in D$) of $J(\cdot)$ will satisfy 
%\vspace{-5pt}
\begin{equation}
%\vspace{-5pt}
\label{eq:costFun}
    J(n^*)= \lim_{m \rightarrow \infty}\frac{1}{m}\mathbb{E} \left[\sum_{i=1}^m g_{n^*}(S_i)\right]=\min_{n \in D} J(n).
\end{equation}
The limit in (\ref{eq:costFun}) exists since $\{S_m\}$ is ergodic ({that also ensures the existence of a unique stationary distribution for the Markov chain under the given policy}) regardless of $n\in D$. 
Moreover, two projections $\bar \Gamma$ and $\Gamma$ are considered to respectively update the parameter in the set $[1,L]$ and determine the actual parameter to be used in the simulation. For the latter, we use a method of random projections to arrive at the parameter value to use from within $D$ given the update in $\bar{D}$ as follows: For $k\leq n \leq k+1$, $1\leq k<L$,
%\vspace{-5pt}
\begin{equation}
%\vspace{-5pt}
    \label{eq:projection}
    \Gamma(n):=
    \begin{cases}
      k, &  \text{w.p.} \ (k+1-n) \\
      k+1, & \text{w.p.} \ (n-k)
    \end{cases}
  \end{equation}
and for $n<1$ or $n>L$, we let
%\vspace{-10pt}
\begin{equation}
%\vspace{-7pt}
\label{eq:projection_bound}
    \Gamma(n):=
    \begin{cases}
      1, &  \text{if} \  n< 1\\
      L, & \text{if} \ n \geq L.
    \end{cases}
  \end{equation}

  %The w.p. symbol in (\ref{eq:projection}) stands for with probability. 
In (\ref{eq:projection}), we use the fact that $n=\beta k+(1-\beta)(k+1)$, $k \leq n \leq (k+1)$, for some $\beta\in [0,1]$, with $k,k+1\in D$. Thus, $\beta=(k+1-n)$ and $1-\beta = (n-k)$, respectively. The lower and upper bounds of the projected values of $n$ are $1$ and $L$, respectively, and are taken care of by (\ref{eq:projection_bound}). On the other hand, $\bar\Gamma$ is the operator that projects any point $n \in \mathbb{R}$ to the set $\bar{D}$ (the closed and convex hull of $D$) and is simply defined as $\bar\Gamma(n)=\min(L,\max(n,1))$. {From the foregoing, a natural way to define the value function estimate $ \hat V_n$ for $n\in \bar{D}$ and in particular, $k\leq n\leq k+1$, would be as follows:
%\vspace{-5pt}
\[
%\vspace{-5pt}
\hat{V}_n(x) := \beta \hat{V}^k(x) + (1-\beta)\hat{V}^{k+1}(x),
\]
where $\hat{V}^k$ and $\hat{V}^{k+1}$ respectively correspond to $\hat{V}_n$ in (\ref{hatV}) but with $n$ replaced by $k$ and $k+1$. }
Two step-size sequences, $\{a_m, m \geq 0\}$ and $\{b_m, m \geq 0\}$ {are employed that satisfy the following conditions:} %Together with the sequence $\{\delta_m,m\geq 0\}$, these satisfy the following: 
$a_m,b_m >0$, $\forall m\geq 0$. Further, 
%\vspace{-7pt}
\begin{equation}
%\vspace{-5pt}
\label{stepSize1}
 \sum_{m} a_m=\sum_{m} b_m= \infty;
 \mbox{ } \frac{a_{k+1}}{a_{k}}
 \rightarrow 1 \mbox{ as } k\rightarrow \infty; 
 \end{equation}
% \vspace{-3pt}
\begin{equation}
%\vspace{-3pt}
\label{stepSize2}
 {\sum_m a_m^2 <\infty}; \sum_m b_m^2 <\infty; \mbox{ } {\lim_{m\rightarrow \infty} \frac{a_m}{b_m}=0}.
\end{equation}
%\vspace{-5pt}
These are standard conditions for step-size sequences in two-timescale stochastic recursions. 
In our work, $\{b_m\}$ is used to estimate both the value function as well as the average cost using a single simulation with a perturbed parameter, whereas $\{a_m\}$ is employed to update the parameter in the SDPSA algorithm. 

\begin{algorithm}[t]
\caption{The SDPSA Algorithm}
\label{SPSA}
{\textbf{Input:} Initial value of $n=n_0$, parameters $\nu$, $\delta$, $\Delta$.}

% \textbf{Parameters:} step-size $\nu$, $\delta$, $\Delta$.

 {\textbf{Output:} Optimal value of $n=n^*$.}
 
$\Delta = +1$ for $k= 0,2,4,6,8,\ldots$

~~~~~~$= -1$ for $k= 1,3,5,7,\ldots$
           
For {$k=0,1,2\dots$}{

~~~~~~$ {n:=}\min(L,\max(1,\Gamma(n + \delta \Delta)))$

~~~~~~{Call Algorithm \ref{n_step_TD} to obtain $J(n)$, $\hat{\dot J}(n) :=\frac{J(n)}{\delta \Delta}$.}

~~~~~~$n \leftarrow n- \nu * {{\hat{\dot J}}}(n)$ ({use (\ref{eq:re_n_update_rule}) $-$ (\ref{TD_state_value_update})} )}

Return optimal $n$
% \end{algorithmic}
\end{algorithm}

In the proposed SDPSA algorithm (see Algorithm \ref{SPSA}), $n$ is initialized to some $n_0\in D$. %The parameters of \Cref{SPSA} are the step-size $\nu$, $\Delta$ and a given $\delta_m\equiv \delta$. 
The SDPSA, unlike regular SPSA, utilizes a deterministic perturbed sequence (i.e., $ \{\Delta_m\}$) with $\Delta_m=+1$ (resp.~$-1$) for $m$ even (resp.~odd). This choice for the perturbation sequence ensures that the bias terms in the zeroth-order gradient estimates cancel cyclically and so the bias in the algorithm does not grow as the recursions progress. %An advantage with our algorithm unlike the algorithms (that use two function measurements) in \cite{Bhatnagar2011} is that it is a one-measurement (or one-simulation) procedure that estimates the gradient of the RMS error. 
%The update rule is the following (here $n_m$ is the value of $n$ at iterate $m$ in the procedure):
%\vspace{-5pt}
%\begin{equation}
%\vspace{-5pt}
%\label{eq:n_update_rule}n_{m+1}= %\bar{\Gamma}\left(n_{m}-a_m \frac{J( %\Gamma(n_m+\delta_m \Delta_m))}{\delta_m %\Delta_m}\right).
%\end{equation}
 The projected parameter $\Gamma(n_m+ \delta \Delta_m)$ is calculated and used for the value function estimate and that in turn gives the error term for the TD($n$) update. %The value of $n$ is updated using (\ref{eq:n_update_rule}). 
% Upon convergence of the SDPSA algorithm, the optimal value $n^*$ is obtained. %We now elaborate the aforementioned procedure by also presenting how one estimates $J(\Gamma(n_m+\delta_m \Delta_m))$. %As mentioned earlier, the
%actual procedure we implement incorporates two timescale schedules. 
Let $n_m$ denote the value of $n$ in the $m$th update of the recursion, $Y_m$ be the running estimate of the long-run average cost and $V_m(i)$ be the $m$th update of the value of state $i$.
The following is the complete set of updates in the algorithm:
%Two-timescale recursions (\ref{eq:re_n_update_rule}) and (\ref{eq:re_grad_J}) are used here.
% \vspace{-5pt}
\begin{eqnarray}
\label{eq:re_n_update_rule}
n_{m+1}&=& \bar{\Gamma}\left(n_{m}-a_m \frac{Y_{m+1}}{\delta \Delta_m}\right),\\
%\vspace{-5pt}
\label{eq:re_grad_J}
    Y_{m+1}&=&Y_m+b_m \left( {g_{n_m}}(S_m) - Y_m \right),\\
\label{TD_state_value_update}
V_{m+1}(i) &=& V_m(i) + b_mI_{S_m}(i)({\hat V_{n_{m}}}(i) - V_m(i)), 
\end{eqnarray}
$i\in S$. 
Here $I_{S_m}(i) =+1$ if $S_m=i$ and $0$ otherwise, denotes the indicator function. Note again that the above algorithm performs a one-simulation zeroth-order stochastic gradient search and thus is amenable to online implementations unlike regular SPSA that requires two simulations with different perturbed parameters. 
%The state-value in TD($n$) is updated using (\ref{TD_state_value_update}) below.% as future rewards are not available for use in the algorithm.

%where, $ 0 \leq i<T,$ and, $V_{i+n}(s) = V_{i+n-1}(s), \forall s \neq S_i$.
%Thus, SDPSA is a two-time scale one-measurement data-driven discrete optimization algorithm. 
%In the next section, the convergence of this algorithm is analyzed.
\section{Asymptotic Convergence Analysis}
\label{conv_analysis}
In this section, we present a detailed asymptotic convergence analysis of the recursions (\ref{eq:re_n_update_rule})-(\ref{TD_state_value_update}). We first give an overview of the results and how they connect with one another. Lemma~\ref{lemma3} below sets the stage for the subsequent analysis. It shows that even though the function $J(n)$ is Lipschitz continuous in $n$, its derivative is discontinuous. Using a suitable two-timescale stochastic approximation argument, Proposition~\ref{prop1} shows the convergence of (\ref{eq:re_grad_J}) and (\ref{TD_state_value_update}) for a given value of $n_m$ (assumed constant as it operates on a slower timescale). We also show here that these recursions are stable. The resulting ODE for our algorithm is (\ref{anotherode2}). However, because its RHS is discontinuous (from Lemma~\ref{lemma3}), we identify a suitable set-valued map $H(n)$ that we show in Lemma~\ref{lem2} is Marchaud (see Definition~\ref{def1}). This property helps bring in the desired regularity on the set-valued map $H(n)$ that defines the DI (\ref{di}). Finally, we show in Theorem~\ref{theorem1} that the slower recursion (\ref{eq:re_n_update_rule}) asymptotically tracks the limit points of the DI (\ref{di}). Subsequently, we identify in Remark~\ref{rem1} the points of convergence of the algorithm.
\vspace{5pt}
\begin{lemma}
\label{lemma3}
 {$J(n)$ is a Lipschitz continuous function in $n\in \bar{D}$. Further, its derivative is piecewise Lipschitz continuous on intervals $[k,k+1)$, $1\leq k\leq L$ but discontinuous in general with points of discontinuity in the set $D$.} 
\end{lemma}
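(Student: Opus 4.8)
The plan is to exploit the interpolation formula $\hat V_n = \beta\,\hat V^k + (1-\beta)\,\hat V^{k+1}$ with $\beta = k+1-n$ on each interval $[k,k+1)$, which renders the $n$-step return \emph{affine} in $n$, and then to read off all regularity of $J$ from the resulting piecewise-polynomial structure. First I would reduce $J(n)$ to a stationary-distribution average: since $\{S_m\}$ is ergodic and independent of $n$, the limit in \eqref{eq:costFun} equals $J(n)=\sum_{x\in S} d(x)\,\mathbb{E}[g_n(x)\mid S=x]$, where $d(\cdot)$ is the ($n$-independent) stationary distribution. Using the stated unbiasedness $\mathbb{E}[\hat V_n(x)]=V^\pi(x)$ together with the fact that the TD($n$) fixed point is $V^\pi$ for every $n$, the limiting value estimate $V_n=V^\pi$ carries no $n$-dependence, so $\mathbb{E}[g_n(x)\mid S=x]=\mathrm{Var}(\hat V_n(x))$; that is, $J(n)$ is precisely the long-run variance $\sum_x d(x)\,\mathrm{Var}(\hat V_n(x))$.

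Next I would fix an interval $[k,k+1)$ and substitute the affine-in-$n$ representation of $\hat V_n(x)$. Because the variance of an affine combination $\beta A+(1-\beta)B$ is a quadratic form in $\beta$, and $\beta=k+1-n$ is affine in $n$, each $\mathrm{Var}(\hat V_n(x))$ is a degree-$\le 2$ polynomial in $n$ on the interval; averaging over $x$ with the fixed weights $d(x)$ preserves this, so $J\big|_{[k,k+1)}$ is a polynomial of degree at most two. Hence $J$ is $C^\infty$ (in particular Lipschitz) on the interior of each interval, and $J'$ is affine, hence Lipschitz there. Continuity across an integer knot $n=k$ follows by noting that the left interpolation (on $[k-1,k)$, where $\beta\to 0^+$) and the right interpolation (on $[k,k+1)$, where $\beta\to 1^-$) both collapse to $\hat V^k$ at $n=k$, so the one-sided limits of $J$ agree. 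A continuous function on the compact interval $[1,L]$ that is piecewise Lipschitz with uniformly bounded Lipschitz constants is globally Lipschitz, which yields the first assertion.

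Finally, for the derivative I would compute the one-sided derivatives of $\mathrm{Var}(\hat V_n(x))$ at an integer $k$. Differentiating the quadratic on each side and evaluating the limits $\beta\to 1^-$ (right) and $\beta\to 0^+$ (left) gives one-sided derivatives involving $\mathrm{Cov}(\hat V^{k-1},\hat V^k)$, $\mathrm{Cov}(\hat V^k,\hat V^{k+1})$ and $\mathrm{Var}(\hat V^k)$; after weighting by $d(x)$ and summing, the left and right derivatives of $J$ at $k$ are seen to differ by a term of the form $\sum_x d(x)\,[\,2\,\mathrm{Var}(\hat V^k(x)) - \mathrm{Cov}(\hat V^{k-1}(x),\hat V^k(x)) - \mathrm{Cov}(\hat V^k(x),\hat V^{k+1}(x))\,]$, which does not vanish in general. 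This shows $J'$ is piecewise Lipschitz on each $[k,k+1)$ but discontinuous exactly on $D$.

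The main obstacle I anticipate is the reduction step: justifying the interchange of limit and expectation needed to write $J(n)$ as the stationary variance (which requires boundedness/uniform integrability of the $n$-step returns, valid here since rewards are bounded and $\gamma<1$), and confirming that $V_n=V^\pi$ independently of $n$ so that no additional $n$-dependence enters $g_n$ beyond that carried by the affine $\hat V_n$. Once $J$ is pinned down as a continuous piecewise quadratic, both the global Lipschitz property and the explicit nonvanishing jump in $J'$ are routine.
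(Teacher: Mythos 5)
Your proposal is correct and follows essentially the same route as the paper: both identify $J(n)$ with the stationary-weighted variance $\sum_x d(x)\,\mathrm{Var}(\hat V_n(x))$ (using that the converged value function is $V^\pi$ independently of $n$), exploit the affine interpolation $\hat V_n=\beta\hat V^k+(1-\beta)\hat V^{k+1}$ to exhibit $J$ as a piecewise quadratic in $n$ with continuity at the integer knots, and then compute the one-sided derivatives at $n=k$ in terms of $\mathrm{Var}(\hat V^k)$ and the covariances with the neighbouring returns to exhibit the jump in $J'$. The only discrepancy is an immaterial constant factor in your expression for the derivative jump; the structure and the nonvanishing conclusion match the paper's.
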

\vspace{-5pt}
\begin{proof}
Recall from (\ref{eq:projection})-(\ref{eq:projection_bound}) that $n_m\in \bar{D}$ satisfies $n_m=\beta k + (1-\beta) (k+1)$ for $\beta{=(k+1-n_m)} \in [0,1]$,  $k \leq n_m\leq (k+1)$. 
For a given (fixed) parameter $n_m \equiv n\in \bar{D}$, let $V_{(n)}$ 
%{ $n_m\equiv n$ and Let $V_{(n)}(\cdot)$ 
denote the converged value of the value estimate $V_m$. This will correspond to $V_{(n)}(x)= E[\hat{V}_{n_m}(x)]$ for any $x\in S$ and can be obtained by running (9) alone keeping $n_m$ fixed.  
Then
%\vspace{-5pt}
\[
%\vspace{-5pt}
V_{{(n)}}(x) = \beta V^{k}(x) + (1-\beta) V^{k+1}(x),
\]
where $V^{k}(x)$ and $V^{k+1}(x)$ are respectively the (converged) value functions obtained when using $k$-step TD and $(k+1)$-step TD respectively. From (\ref{hatV}), it is easy to see that $V^k(x) = V^{k+1}(x)$, $\forall x\in S, k\in D$ since the solutions to the $k$-step Bellman equation or the $(k+1)$-step Bellman equation will be the same for any given policy. %This can further be verified using (\ref{hatV}). 
Thus, $V_{(n)}(x) = V^k(x) \stackrel{\triangle}{=} V(x)$, $\forall n\in \bar{D}$, $\forall k\in D$. Thus, $V_{(n)}(x)$ is a constant function of $n$ and is therefore trivially Lipschitz continuous.

{Now let $\hat{V}_{(n)}(s)$ denote the following estimate of ${V}_{(n)}(s)$ for $n\in \bar{D}$. In other words,
%\vspace{-5pt}
\[
%\vspace{-5pt}
\hat{V}_{(n)}(s) = (k+1-n) \hat{V}^k(s) + (n-k)\hat{V}^{k+1}(s),\]
where, with $S_0=s$,
%\vspace{-10pt}
\[
% \vspace{-5pt}
\hat{V}^k(s) = \sum_{j=1}^{k} \gamma^{j-1}R_j + \gamma^k V(S_k),
\]
%\vspace{-7pt}
\[
%\vspace{-7pt}
\hat{V}^{k+1}(s) = \sum_{j=1}^{k+1} \gamma^{j-1}R_j + \gamma^{k+1} V(S_{k+1}),
\]
respectively. The difference of $\hat{V}^k(\cdot)$ (or $\hat{V}^{k+1}(\cdot)$) from $\hat{V}_n(\cdot)$ in (\ref{hatV}) is that $V_n(\cdot)$ there which was an estimate similar to (9) that is now replaced by it's converged value (i.e., the true value function) $V(\cdot)$.
From the definition of $J(n)$ as the long-run average of $g_n(S_i)$, it follows that for $k\leq n \leq k+1$, with $d(s), s\in S$ being the stationary distribution of the Markov chain being in state $s\in S$, we have
%\vspace{-5pt}
\[
%\vspace{-5pt}
J(n) = \sum_s d(s) E[(\hat{V}_{(n)}(s) - V_{(n)}(s))^2],
\]
where the expectation $E[\cdot]$ is taken over the joint distribution of states $S_1,\ldots,S_{[n]}$, where $[n]$ is the randomly projected time instant to one of the integers $k$ or $k+1$ (where $k\leq n\leq k+1$). Thus,
%\vspace{-7pt}
\[
%\vspace{-5pt}
J(n) = (k+1-n)^2\sum_s d(s) Var(\hat{V}^k(s)) \]
%\vspace{-5pt}
\[
%\vspace{-5pt}
+ (n-k)^2 \sum_s d(s) Var(\hat{V}^{k+1}(s)) \]
%\vspace{-5pt}
\[
%\vspace{-5pt}
+ 2(k+1-n)(n-k)\sum_sd(s) Cov(\hat{V}^k(s), \hat{V}^{k+1}(s)).
\]
Here $Var(X)$ stands for the variance of $X$ and $Cov(X,Y)$ denotes the covariance of $X$ and $Y$,  respectively. 
Let $J_k(n) \stackrel{\triangle}{=} J(n)|_{[k,k+1]}$ be the restriction of $J(n)$ to the interval $[k,k+1]$. Then, from the above, for $k\leq m,l <k+1$, using the fact that the single-stage reward and the value function $V(s)$ as well as the estimates $\hat{V}^k(s)$ are uniformly bounded, and furthermore $1\leq m,l \leq L$,
\[|J_k(m)-J_k(l)| \leq M_k|m-l|,\]
where $M_k>0$ is the Lipschitz constant for $J_k(\cdot)$ on the interval $[k,k+1]$, the subscript $k$ indicating the dependence of the constant on the aforementioned interval.

Further, for $m\in [k,k+1]$,
\[\lim_{m\downarrow k} J_k(m) = \sum_s d(s)Var(\hat{V}^k(s)).\] Now, 
for $l\in [k-1,k]$,
\[\lim_{l\uparrow k} J_{k-1}(m) = \sum_s d(s)Var(\hat{V}^k(s)).\]
Thus, $J(\cdot)$ is piecewise Lipschitz continuous on the intervals $[k,k+1]$ and is continuous at the boundary points of such intervals.
Now observe that 
%\vspace{-5pt}
\[
%\vspace{-5pt}
|\max_{k\in D} J_k(m) - \max_{k\in D} J_k(l)|
\leq \max_{k\in D} |J_k(m)-J_k(l)| \]\[
\leq M|m-l|,
\]
where $M= \max_{k\in D} M_k$. Thus, $J(\cdot)$ is Lipschitz continuous in $\bar{D}$. Now,
\[
%\vspace{-5pt}
\frac{dJ(n)}{dn} = 
 -2(k+1-n)\sum_s d(s) Var(\hat{V}^k(s)) \] 
\[
%\vspace{-5pt}
+ 2(n-k) \sum_s d(s) Var(\hat{V}^{k+1}(s))
\]
\[
%\vspace{-5pt}
+ (4k+2-4n) \sum_sd(s) Cov(\hat{V}^k(s),\hat{V}^{k+1}(s)).
\]
Now for, $m,l\in [k,k+1)$ we have
\[
\left|\frac{dJ(n)}{dn}|_{n=m}-\frac{dJ(n)}{dn}|_{n=l} \right| \]\[\leq
2|m-l|(\max_{s} Var(\hat{V}^{k+1}(s))+
\max_{s} Var(\hat{V}^{k}(s)))\]
\[
+4|m-l|\max_s Cov(\hat{V}^{k}(s), \hat{V}^{k+1}(s)).
\]
Since, the single stage rewards are uniformly bounded almost surely and so also are the value functions, there exists a constant $K_1>0$ such that
\[
\left|\frac{dJ(n)}{dn}|_{n=m}-\frac{dJ(n)}{dn}|_{n=l} \right| \leq
K_1|m-l|,
\]
for all $m,l\in [k,k+1)$. 
Thus, ${\displaystyle \frac{dJ(n)}{dn}}$ is Lipschitz continuous on $[k,k+1)$ for any $1\leq k\leq L$. Now observe that
%\vspace{-5pt}
\[
%\vspace{-5pt}
\lim_{n\downarrow k} \frac{dJ(n)}{dn}
= -2\sum_s d(s) (Var(\hat{V}^k(s)) -
Cov(\hat{V}^k(s), \hat{V}^{k+1}))
\]
%\vspace{-10pt}
\[
%\vspace{-5pt}
\not= \lim_{n\uparrow k} \frac{dJ(n)}{dn}.
\]
The latter limit above can be obtained by taking $n\in [k-1,k)$ and letting $n\rightarrow k$.}
%Thus, for such a value of $n\in \bar{D}$, one will have
%
%for any state $x\in S$ where  Now observe from (\ref{eq:projection})-(\ref{eq:projection_bound}) that $\beta$ is a differentiable function of $n\in \bar{D}$ (the closed convex hull of the discrete set $D$) with a constant derivative that is thus trivially Lipschitz continuous. It follows that $V_n(x)$ is differentiable in $n\in \bar{D}$ and is Lipschitz continuous. Further, {its} derivative is Lipschitz continuous as well. Now observe that
%\vspace{-15pt}
%\[
%\vspace{-5pt}
%J(n) = \lim_{m\rightarrow\infty}
%\frac{1}{m} E\left[ \sum_{i=1}^m %g_n(X_i)\right]
%= \sum_{s\in S} d(s) g_n(s),
%\]
%where $d(s), s\in S$, is the stationary distribution of the ergodic Markov process $\{S_n\}$. From the foregoing, $g_n(s)$ is differentiable in $n\in \bar{D}$ with {its} derivative Lipschitz continuous and so therefore is $J(n)$. Further, since $n\in \bar{D}$, a compact set, and $J(n)$ has a bounded derivative, $J(\cdot)$ itself is Lipschitz continuous as well.
\end{proof}
We now analyse the two-timescale recursions (\ref{eq:re_n_update_rule})-(\ref{TD_state_value_update}) under the step-size requirements (\ref{stepSize1})-(\ref{stepSize2}). 
%Let 
%\vspace{-5pt}
%\begin{equation}
%\vspace{-5pt} 
%K'=\{n\in \bar D \ | \ \hat{\bar{\Gamma}}(-%{\dot J(n)})=0\},
%\end{equation}
{Given any real-valued continuous function $f(x)$}, with $x\in \bar{D}$, let 
${\displaystyle 
\hat{\bar{\Gamma}}(f(x)) = \lim_{\beta\rightarrow 0} \left(
\frac{\bar{\Gamma}(x+ \beta f(x))-x}{\beta}
\right)}.$
Since $\bar{D}$ is a closed and convex set, we have that for any $x\in \bar{D}$, $\hat{\bar{\Gamma}}(f(x))$ is well defined and unique. 
Let $\mathbb{D}$ be a diagonal matrix containing the stationary probabilities $d(i),i\in S$ as the diagonal elements. Let $V\stackrel{\triangle}{=} (V(1),\ldots,V(|S|))^T$. 

Consider now the following system of ordinary differential equations (ODEs) (that correspond to the fast timescale):
%\vspace{-5pt}
\begin{eqnarray}
%\vspace{-5pt}
\label{ode1}
   \dot n (t) &=& 0,\\
%\vspace{-5pt}
\label{ode2}
   \dot Y(t)&=& J(\bar \Gamma(n(t)+\delta \Delta(t))) -Y(t),\\
    \label{ode2-1}
    \dot{V}(t) &=& \mathbb{D}(V_{n(t)}(t) - V(t)).
\end{eqnarray}
In the light of \eqref{ode1}, $n(t)\equiv n$, $\forall t$, hence the ODEs \eqref{ode2}-\eqref{ode2-1} can be rewritten as
%\vspace{-5pt}
\begin{eqnarray}
%\vspace{-5pt}
\label{ode3}
   \dot Y(t)&=&J(\bar \Gamma(n+\delta \Delta(t))) -Y(t),\\
   \label{ode3-1}
\dot{V}(t) &=& \mathbb{D}(V_{n} - V(t)).
\end{eqnarray}
%Since $\delta(t)\rightarrow 0$ as $t\rightarrow\infty$, 
Here $\Delta(t)=\Delta_m,$ for $t\in [c(m),c(m)+a(m)]$, where, $c(m)=\sum_{i=0}^{m-1}a(i)$, $m\geq 1$. Now \eqref{ode3} has $Y^*=\lambda(n) \equiv J(\bar{\Gamma}(n+\delta \Delta_m))$ as {its} unique globally asymptotically stable equilibrium. Similarly, $V^*=V_n$ is the unique globally asymptotically stable equilibrium of \eqref{ode3-1}. By Lemma~\ref{lemma3}, $\lambda(\cdot)$ is Lipschitz continuous.

%Let ${\displaystyle K'=\{n\in \bar D \ | \ \hat{\bar{\Gamma}}(-{\frac{J(n(t)+\delta\Delta(t))}{\delta\Delta(t)})}=0\}})$ and let $K$ be the largest invariant set contained within $K'$. We recall here that the set $K$ is invariant for the ODE \eqref{anotherODE} if it is closed and any trajectory $n(\cdot)$ of the ODE \eqref{anotherODE} for which $n(0)\in K$ satisfies $n(t)\in K$, $\forall t\in \mathbb{R}$. It is important to note here that the function $J$ itself serves as a Lyapunov function for \eqref{anotherODE} since
%\vspace{-5pt}
%\[
%\vspace{-5pt}
%\frac{dJ(n)}{dt} = \dot{J}(n)\dot{n}
%= \dot{J}(n)\hat{\bar{\Gamma}}(-\dot{J}(n)) \leq 0. \]
%
%Given $\epsilon>0$, let $K^{\epsilon}$ be the set of points within a distance $\epsilon$ from the points in the set $K$, i.e., ${\displaystyle 
 %K^{\epsilon}=\{n\in \bar{D} \ | \ ||n-n_0 || < \epsilon, n_0\in K\}}$. 
%Define $P$ as follows: %and $L^{\epsilon}$ as follows: 
%${\displaystyle 
% P=\{\hat n \in D \ | \ \hat n=\Gamma (n), n\in K\}}$. %and ${\displaystyle 
% L^{\epsilon}=\{\hat n \in D \ | \ \hat n=\Gamma (n), n\in K^{\epsilon}\}}$, 
%respectively. 
%Theorem~\ref{theorem1} provides the main convergence result. 

\vspace{5pt}
\begin{proposition}
\label{prop1}
The following hold:
\begin{itemize}
    \item[(a)] $\|Y_m-J(\bar{\Gamma}(n+\delta\Delta_m))\| \rightarrow 0$ a.s. as $m\rightarrow \infty$,
    \item[(b)] $\|V_m - V_{(n)}\| \rightarrow 0$ a.s. as $m\rightarrow \infty$.
\end{itemize}
\end{proposition}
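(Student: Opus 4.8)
The plan is to treat \eqref{eq:re_grad_J}--\eqref{TD_state_value_update} as the fast component of a two-timescale stochastic approximation scheme and to establish their convergence through the ODE method. Since $a_m/b_m\to 0$ by \eqref{stepSize2}, the slow iterate $n_m$ is quasi-static on the $b_m$-timescale; following the standard two-timescale argument \cite{bhatnagar-book}, I would freeze $n_m\equiv n$ and analyse the resulting frozen recursions, whose limiting ODEs are \eqref{ode3}--\eqref{ode3-1}. Writing $\mathcal{F}_m$ for the natural filtration generated by the iterates and the states up to time $m$, each recursion would be cast in the canonical form ``increment $=$ step-size $\times$ (mean-field drift $+$ martingale difference $+$ Markov-noise correction)'', the last term being averaged out by the ergodicity of $\{S_m\}$.

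I would prove (b) first, since the $V$-recursion \eqref{TD_state_value_update} does not involve $Y_m$ and is autonomous once $n$ is frozen; it is exactly an $n$-step TD value-learning recursion driven by the ergodic chain $\{S_m\}$. The verification of the ODE-method hypotheses is routine: the iterates are a.s.\ bounded because the single-stage rewards are uniformly bounded and $\gamma<1$, so $V_m$ and the $n$-step returns $\hat V_{n_m}$ remain in a fixed compact set (alternatively, the Borkar--Meyn stability criterion \cite{bhatnagar-book} applies to the linearly stable ODE \eqref{ode3-1}); the martingale-difference noise has bounded conditional second moment and is square-summable via $\sum_m b_m^2<\infty$; and the conditional mean of $I_{S_m}(i)(\hat V_{n_m}(i)-V_m(i))$, averaged against the stationary distribution $d(\cdot)$, gives the drift $\mathbb{D}(V_{(n)}-V_m)$, so the limiting ODE is \eqref{ode3-1}. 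As noted after \eqref{ode3-1}, $V_{(n)}$ is its unique globally asymptotically stable equilibrium, and the ODE method then yields $\|V_m-V_{(n)}\|\to 0$ a.s., proving (b).

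For (a) I would feed the limit from (b) into the $Y$-recursion. Using $\|V_m-V_{(n)}\|\to 0$ and $g_{n_m}(S_m)=(\hat V_{n_m}(S_m)-V_m(S_m))^2$, the driving term of \eqref{eq:re_grad_J} equals the single-stage cost evaluated at the converged value function plus an a.s.\ vanishing perturbation. Hence \eqref{eq:re_grad_J} is asymptotically a relaxation recursion whose stationary (ergodic) drift is $J(\bar\Gamma(n+\delta\Delta_m))-Y_m$, precisely by the definition of $J(\cdot)$ in \eqref{eq:costFun} as the long-run average of the single-stage cost under the perturbed-projected parameter. After the same routine check of boundedness and of the martingale/Markov-noise split, the limiting ODE is \eqref{ode3}, whose unique globally asymptotically stable equilibrium is $\lambda(n)=J(\bar\Gamma(n+\delta\Delta_m))$, Lipschitz by Lemma~\ref{lemma3}. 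Invoking the ODE method with a vanishing perturbation then gives $\|Y_m-J(\bar\Gamma(n+\delta\Delta_m))\|\to 0$ a.s., proving (a).

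The main obstacle I anticipate is the rigorous treatment of the Markov noise together with the coupling between the two fast recursions. The clean hierarchy above---analyse $V_m$ first and then pass its limit into the $Y_m$-analysis as a vanishing perturbation---relies on the autonomy of \eqref{TD_state_value_update} and on an ergodic-averaging lemma for the look-ahead $n$-step return, whose conditional expectation given the current state reproduces the $n$-step Bellman target. A secondary subtlety is the alternating deterministic perturbation $\Delta_m$: consistent with \eqref{ode3}, it is held at its current value in the fast-timescale ODE, which is legitimate because $\Delta_m$ (like $n_m$) varies on the slow $a_m$-timescale relative to the $b_m$-updates of $Y$, so $Y$ equilibrates to the current target before the perturbation switches, while Lemma~\ref{lemma3} guarantees the two possible targets are only $O(\delta)$ apart. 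Establishing a.s.\ boundedness of the iterates and this ergodic-averaging lemma are the technical crux; the remaining steps are the standard two-timescale ODE argument.
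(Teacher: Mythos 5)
Your proposal is correct and follows essentially the same route as the paper: freeze $n$ on the fast timescale, establish almost sure boundedness of the iterates from the uniform boundedness of rewards and single-stage costs, identify \eqref{ode3}--\eqref{ode3-1} as the limiting ODEs with $J(\bar\Gamma(n+\delta\Delta_m))$ and $V_{(n)}$ as their unique globally asymptotically stable equilibria, and invoke the standard two-timescale ODE convergence theorem (the paper cites Theorem 8.2, Chapter 8 of the Borkar text). The only cosmetic difference is that you sequence the argument as $V$ first and then $Y$ with a vanishing-perturbation step, whereas the paper treats both fast recursions simultaneously under the cited theorem; both are valid.
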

\vspace{-5pt}

\begin{proof}
 Consider the recursion \eqref{eq:re_grad_J}. Note that since the state space $S$ is finite, the rewards\\
 $|R(S_t,\pi(S_t),S_{t+1})| \leq B$ w.p.1 for some $B<\infty$, i.e., the rewards are a.s.~uniformly bounded. %It then follows that the value function $V^\pi(s), s\in S$ is uniformly bounded as well %since
%\[
%\sup_s |V^\pi(s)| \leq \sup_s E\left[\sum_{k=0}^{\infty} \gamma^k |R(X_k,\pi(X_k),X_{k+1})|\mid X_0=s\right]\]
%\[\leq \frac{B}{(1-\gamma)} <\infty.
%\]
As noted previously, it is easy to see that
$\sup_s \sup_n |V_n(s)| <\infty$  and 
$\sup_s \sup_n |g_n(s)| <\infty$.
Now observe that from the first part of conditions \eqref{stepSize2}, both $a_m,b_m\rightarrow 0$ as $m\rightarrow\infty$. Then $\exists n_0\geq 1$ such that for all $n>n_0$, $0<b_n<1$ and $Y_{m+1}$ in recursion \eqref{eq:re_grad_J} is a convex combination of $Y_m$ and $g_n(S_m)$ a uniformly bounded quantity. Thus, the recursion \eqref{eq:re_grad_J} remains uniformly bounded almost surely, i.e., that
${\displaystyle 
\sup_n |Y_m| <\infty}$ a.s. Similarly, the recursion (\ref{TD_state_value_update}) remains uniformly bounded as well. The ODE \eqref{ode3} has $Y^*(n) = J(\bar \Gamma(n+{\delta} \Delta_m))$ as {its} unique globally asymptotically stable equilibrium. Likewise \eqref{ode3-1} has $V_{(n)}$ as {its} unique globally asymptotically stable equilibrium.
The claims in both (a) and (b) above now follow from Theorem 8.2, Chapter 8, of \cite{borkar-book}.
%associated with \eqref{eq:re_grad_J1} is thus \eqref{ode3} and the following can be seen to hold from Hirsch's lemma \cite{Hirsch1989}:
%\begin{eqnarray*}
%\|Y_m-J(\bar \Gamma(n+{\delta_m} \Delta_m))\| &\rightarrow & 0 \mbox{ a.s. as } m\rightarrow\infty,\\
%\|V_m - V_{(n)}\| &\rightarrow& 0 \mbox{ a.s. as } m\rightarrow\infty,
%\end{eqnarray*}
%respectively.
\end{proof}

We now consider the slower timescale recursion (7). 
The ODE on the slower timescale is the following:
%\vspace{-5pt}
\begin{equation}
%\vspace{-5pt}
\label{anotherODE}
    \dot n(t)=\hat{\bar{\Gamma}}\left(-\frac{J(\bar{\Gamma}(n(t)+\delta\Delta(t)))}{\delta\Delta(t)}\right).
\end{equation}
Consider the case when $n_m$ lies in the interior of $\bar{D}$. Then for $\delta>0$ sufficiently small (i.e., $\delta<\delta_0$ for some $\delta_0>0$), we will have that $\bar{\Gamma}(n_m+\delta\Delta_m) = n_m +\delta\Delta_m$. %Then using a Taylor's expansion around the point $n_m$, we will have that
 Assuming that $\dot{J}(n)$ is continuous, a Taylor's expansion 
around the point $n$ would give 
%\vspace{-5pt}
\begin{equation}
%\vspace{-5pt}
\label{eq:taylorExp}
   J(n+\delta\Delta_m)
   =J(n)+ \delta\Delta_{m} \dot{J}(n)+o(\delta).
\end{equation}
Hence,
%\vspace{-5pt}
\begin{equation}
%\vspace{-5pt}
    \frac{J(n+\delta \Delta_m)}{\delta\Delta_m}
    =\frac{J(n)}{\delta\Delta_m}+ \dot{J}(n)+O(\delta).
\end{equation}
Thus the bias in the gradient estimation would be as follows:
%\vspace{-5pt}
\begin{equation}
%\vspace{-5pt}
%\begin{multline}
    \label{eq:bias_grad}
    \frac{J(n+\delta\Delta_m)}{\delta \Delta_m} - \dot{J}(n) \\
    =\frac{J(n)}{\delta} \left[\frac{1}{\Delta_m}\right]+ O(\delta).
 %\end{multline}   
\end{equation}
It is easy to see that for any $k\geq 0$,
$\sum_{m=k}^{k+1} \frac{1}{\Delta_m}=0$, i.e., the summation becomes zero over any cycle of length two. It can now be shown using \eqref{stepSize1}, in a similar manner as Corollary 2.6 of \cite{bhatnagar2003} that
%\vspace{-5pt}
\[
%\vspace{-5pt}
\|\sum_{m=k}^{k+1} \frac{a(m)}{a(k)}\frac{1}{\Delta_m}J(n)\| \rightarrow 0 \mbox{ as } k\rightarrow\infty.
\]
Thus for $\delta>0$ sufficiently small, it will follow from Theorem 1, Chapter 2 of \cite{borkar-book} that the slower recursion (7) governing $n_m$ will converge almost surely to a small neighborhood of the set $\bar{K} = \{n | \hat{\bar{\Gamma}}(\dot{J}(n)) =0\}$.  
Note however from Lemma~\ref{lemma3} above that $\dot{J}(n)$, $n\in \bar{D}$ is Lipschitz continuous on the intervals $[k,k+1)$ but is discontinuous in general at the boundary points $k \in D$. This means that one cannot approximate the ODE (\ref{anotherODE}), for $\delta>0$ small, with the ODE
%\vspace{-5pt}
\begin{equation}
%\vspace{-5pt}
    \label{anotherode2}
    \dot{n}(t) = \hat{\bar{\Gamma}}(-\dot{J}(n)),
\end{equation}
because (\ref{anotherode2}) has a discontinuous RHS. A natural way to deal with such systems is to consider a differential inclusion (DI) limit that involves point-to-set maps \cite{benaim, RB1, AB4, YB1, YB2} instead of (\ref{anotherode2}). We present the construction of the DI below.  %(see also \cite{RB2}) 

We define a set-valued map $H(n)$, see \cite{NY, BS}, as follows:
%\vspace{-5pt}
\[
%\vspace{-5pt}
H(n) = \cap_{\eta>0}\cap \overline{co}(\{\hat{\bar{\Gamma}}(-\dot{J}(m)) | \|m-n\|<\eta\}).
\]
Here $\overline{co}(A)$ denotes the closed convex hull of the set $A$.
Note that the operator $\hat{\bar{\Gamma}}$ ensures that the trajectories of the DI stay within $\bar{D}=[1,L]$ and in particular for $n\in (1,L)$, $\hat{\bar{\Gamma}}(-\dot{J}(n)) = -\dot{J}(n)$. Further, for $n=1$, $\hat{\bar{\Gamma}}(-\dot{J}(n)) = -\dot{J}(n)$ if $\dot{J}(n) <0$ (since then $\bar{\Gamma}(n-\beta \dot{J}(n)) = n-\beta \dot{J}(n)$ for $\beta>0$) small enough, else it equals 0 (since 
then $\bar{\Gamma}(n-\beta J(n)) = n$). Similarly, for $n=L$, $\hat{\bar{\Gamma}}(-\dot{J}(n)) = -\dot{J}(n)$ if $\dot{J}(n) > 0$,
else it equals 0. Thus, the set-valued map $H(n)$ in our setting corresponds to the following: $H(n)=-\dot{J}(n)$, for $n\in (k,k+1)$ with $k,k+1\in D$, and $H(n) = [\alpha_k,\beta_k]$, for $n=k\in [2,L-1]$, where $\alpha_k$ (resp.~$\beta_k$) is the lower (resp.~upper) limit of $\dot{J}(n)$ at $n=k$. Furthermore, for $n=1$ and $n=L$, we still let $H(n)=[\alpha_k,\beta_k]$ with $k=1$ or $L$ if $0\in H(n)$. Else, we take the closed convex hull of the points $0,\alpha_k,\beta_k$ when $k=1$ or $k=L$. For ease of exposition, we continue to call $H(1) = [\alpha_1,\beta_1]$ and $H(L) = [\alpha_L, \beta_L]$, respectively, assuming that $0$ is an element of these sets.

\vspace{5pt}
\begin{definition}
\label{def1}
We say that a set-valued map $H:\mathcal{R} \rightarrow \{\mbox{subsets of }\mathcal{R}\}$ is Marchaud if it satisfies the following conditions:
\vspace{-5pt}
\begin{enumerate}
    \item $H(n)$ is compact and convex for every $n$.
    \item $H(n)$ satisfies a linear growth condition, i.e., that
    %\vspace{-5pt}
    \[
    %\vspace{-5pt}
    \max_{l\in H(n)} |l| \leq L'(1+|n|),
    \]
    for some $L'>0$. 
    \item $H(n)$ is upper-semicontinuous, i.e., it's graph $\{(n,m) | m\in H(n)\}$ is closed.
\end{enumerate}
\end{definition}
Consider now the differential inclusion (DI): 
%\vspace{-5pt}
\begin{equation}
%\vspace{-5pt}
    \label{di}
    \dot{n}(t) \in H(n(t)).
\end{equation}
An important consequence of $H(\cdot)$ being Marchaud is that it would then follow from \cite{aubin} that every solution to the DI (\ref{di}) will be absolutely continuous.

\vspace{10pt}
\begin{lemma}
    \label{lem2}
    The set-valued map $H(n)$ is Marchaud.
\end{lemma}
\vspace{-10pt}
\begin{proof}

Note that $H(n)$ is the singleton $H(n)=\{\dot{J}(n)\}$ for $n\in (k,k+1)$, where $k,k+1\in D$. Thus, for $n$ as above, $H(n)$ is trivially compact and convex. Further, for $n=k\in D$, $H(n) =[\alpha_k,\beta_k]$, where $-\infty< \alpha_k < \beta_k < \infty$. Clearly $H(n)$ for $n=k$ is compact and convex as well.

Next, we show point-wise boundedness of the set-valued map. Note that for $n\in (k,k+1)$, with $k,k+1\in D$, 
%\vspace{-5pt}
\[
%\vspace{-5pt}
\sup_{l\in H(n)} |l| = |\dot{J}(n)| \leq K'(1+n).
\]
This follows because $H(n)=\{\dot{J}(n)\}$ for $n\in (k,k+1)$. Moreover, from Lemma~\ref{lemma3}, $\dot{J}(n)$ is Lipschitz continuous in $(k,k+1)$. Thus, for $n,n_0 \in (k,k+1)$, and some $L'>0$,
%\vspace{-5pt}
\[
%\vspace{-5pt}
|J(n)|-|J(n_0)| \leq |J(n)-J(n_0)| \leq L'|n-n_0|.
\]
Thus,
%\vspace{-5pt}
\[
|J(n)| \leq |J(n_0)| +L'n_0 +L'n \leq L''(1+n),
\]
where $L'' = |J(n_0)|+L'n_0$. %Note above that we have dropped the absolute value since $|J(m)|=J(m)$ for any $m\in D$ as $J(\cdot)$ is a positive function. 
Note also that for $n=k$, $H(n) = [\alpha_k,\beta_k]$. Hence, ${\displaystyle \sup_{l\in H(n)} |l| \leq K''(1+n)}$ since $[\alpha_k,\beta_k]$ is compact and $k\in D$, a finite set.
   
Finally, we show that $H(n)$ is upper semi-continuous. The non-trivial case here is when a sequence $\{n_l\}$ of points in $\bar{D}$ converges to an integer (point) $k\in D$, i.e., $n_l\rightarrow k$ as $l\rightarrow\infty$. Thus if $n_l \not\in D$, $H(n_l) =\{\dot{J}(n_l)\}$. Then the only points $y_l \in H(n_l)$ that one may consider are $y_l= \dot{J}(n_l)$, $\forall n_l$. The limit point $y$ of the sequence $\{y_l\}$ will be either $\alpha_k$ or $\beta_k$, both of which are contained in $H(k)$. Thus, $H(n)$ is upper semi-continuous as well. The claim follows.
\end{proof}

\begin{theorem}
\label{theorem1}
The sequence $\{n_m\}$ obtained from the SDPSA algorithm satisfies $n_M\rightarrow P$ almost surely as $m\rightarrow\infty$, where $P$ is an internally chain transitive set of the DI (\ref{di}).
\end{theorem}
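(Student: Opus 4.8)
The plan is to recast the slow recursion \eqref{eq:re_n_update_rule} as a stochastic recursive inclusion and then invoke the theory of stochastic approximation driven by differential inclusions (see \cite{benaim}) to show that the piecewise-linear interpolation of $\{n_m\}$ (with time steps $\{a_m\}$) is an asymptotic pseudotrajectory of the set-valued flow induced by the DI \eqref{di}; by Benaim's theorem the limit set of a precompact asymptotic pseudotrajectory is internally chain transitive, which is exactly the claim. The Marchaud property established in Lemma~\ref{lem2} supplies precisely the regularity on $H(\cdot)$ that this machinery requires, so the bulk of the work is to verify the remaining hypotheses for the increment of \eqref{eq:re_n_update_rule}.

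First I would decompose the slow increment $-a_m Y_{m+1}/(\delta\Delta_m)$ into three pieces. By Proposition~\ref{prop1}(a), the fast estimate $Y_{m+1}$ tracks $J(\bar{\Gamma}(n_m+\delta\Delta_m))$, so the first piece replaces $Y_{m+1}$ by this limit, the residual being absorbed into an asymptotically negligible error by the two-timescale argument (the fast recursions equilibrate while $n_m$ is quasi-static on the slow scale). The second piece is the gradient bias: for $n_m$ interior to $\bar{D}$ and $\delta$ small one has $\bar{\Gamma}(n_m+\delta\Delta_m)=n_m+\delta\Delta_m$, and the expansion \eqref{eq:taylorExp} yields a drift $-\dot{J}(n_m)+O(\delta)$ together with the bias term $J(n_m)/(\delta\Delta_m)$ exhibited in \eqref{eq:bias_grad}. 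The third piece is the martingale-difference noise of the fast estimator. I would then dispose of the bias using the deterministic, cyclic perturbation structure: since $\sum_{m=k}^{k+1} 1/\Delta_m = 0$ over each length-two cycle, the weighted partial sums of the bias vanish, as in the estimate $\|\sum_{m=k}^{k+1}(a(m)/a(k))(1/\Delta_m)J(n)\|\to 0$ already displayed above. The martingale noise is controlled via the square-summability of $\{a_m\}$ in \eqref{stepSize2} together with the uniform boundedness established in the proof of Proposition~\ref{prop1}, and boundedness of $\{n_m\}$ is immediate because $\bar{\Gamma}$ confines the iterates to the compact set $\bar{D}=[1,L]$.

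What then remains is to identify the limiting drift with an admissible selection of $H(\cdot)$: wherever $\dot{J}$ is continuous the drift is the singleton $-\dot{J}(n_m)$, at the integer discontinuity points $k\in D$ the admissible limiting directions are exactly the interval $[\alpha_k,\beta_k]$ used to define $H$, and at the endpoints $n=1,L$ the operator $\hat{\bar{\Gamma}}$ produces the projected directions already built into $H$. With these verifications in hand the main convergence theorem for stochastic recursive inclusions applies and gives $n_m\to P$ almost surely for an internally chain transitive set $P$ of \eqref{di}. The main obstacle I anticipate is the simultaneous handling of the three error sources---the fast-timescale tracking residual, the deterministic-perturbation bias $J(n_m)/(\delta\Delta_m)$, and the martingale noise---so that their aggregate satisfies the asymptotic-pseudotrajectory conditions. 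The bias term is formally $O(1/\delta)$ and does not vanish pointwise; only its cyclic averaging renders it admissible, and this must be reconciled with the two-timescale replacement of $Y_{m+1}$ by its limit, all while honoring the discontinuity of $\dot{J}$ through the set-valued map rather than a single vector field. This coordination between averaging, bias cancellation, and the DI selection is where the delicate part of the argument lies.
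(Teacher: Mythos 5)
Your proposal is correct and follows essentially the same route as the paper: both recast the slow recursion as a bounded perturbed solution of the DI \eqref{di} (using the Marchaud property from Lemma~\ref{lem2}, the two-timescale tracking from Proposition~\ref{prop1}, and the cyclic cancellation of the $J(n_m)/(\delta\Delta_m)$ bias) and then invoke Theorem 3.6 of \cite{benaim} to conclude that the limit set is internally chain transitive. The paper's proof compresses the error decomposition into the phrase ``from the foregoing,'' whereas you spell out the three error sources explicitly, but no new idea is introduced on either side.
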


\begin{proof}
%From the last part of the step-size requirements in \eqref{stepSize2}, it is easy to see that \eqref{eq:re_n_update_rule1} asymptotically tracks the ODE \eqref{ode1}.
Note that we can rewrite
(\ref{eq:re_n_update_rule}) as follows:
%\vspace{-5pt}
\begin{equation}
%\vspace{-5pt}
\label{eq:re_n_update_rule1}
n_{m+1}= \bar{\Gamma}\left(n_{m}-b_m \left(\frac{J(\bar{\Gamma}(n_m +\delta\Delta_m))}{\delta\Delta_m}\right)\right).
\end{equation}
From the foregoing, the above is analogous to 
\begin{equation}
\label{perturbed}
n_{m+1} = \bar{\Gamma}(n_m -b_m (z(n_m) + O(\delta))),
\end{equation}
where $z(n_m) \in H(n_m)$. In particular, let $z(n_m) = \dot{J}(n_m)$ for $n_m \in (k,k+1)$, $k,k+1\in D$. It follows from Theorem 3.6 of \cite{benaim} that the limit set\\
${\displaystyle L(y) = \cap_{t\geq 0} \overline{\{y(s) | s\geq t\}}}$ of any bounded perturbed solution $y(\cdot)$ to the DI (\ref{di}) is internally chain transitive. Further, (\ref{perturbed}) by itself is a bounded and perturbed solution to the DI (\ref{di}). The claim follows. 
\end{proof}

\begin{remark}
\label{rem1}
Note from Theorem~\ref{theorem1} that if $\hat{\bar{\Gamma}}(-\dot{J}(n^*))=0$ for some $n^*\in C\subset \bar{D}$, then $0\in H(n^*)$ and the recursion (\ref{perturbed}) will converge to the largest chain transitive invariant set contained in $C$. From the foregoing, there are at least two points in $D$, namely $n=1$ and $n=L$ for which $0\in H(n)$. Thus, in general, if the algorithm does not converge to a point in the set
$D^o=\{2,3,\ldots,L-1\}$, it will converge to either $n=1$ or $n=L$.
\end{remark}

\section{Experiments and Results}
\label{exp_results}
The goal of our experiments is to verify the correctness and efficiency of our proposed SDPSA algorithm to achieve the optimal value of $n$ to use in the case of the $n$-step TD algorithm starting from an arbitrary initial value of the same. For our implementations, we consider two different RL benchmark environments, namely the Random Walk (RW) (cf.~Chapter 7 of \cite{RL_Book}) and the Grid-world (GW) (an open-source benchmark from Farama Foundation \cite{MinigridMiniworld23}), comprising of $21$ and $256$ states, respectively. 
 
In our experiments, the proposed SDPSA algorithm is implemented and our objective is to obtain the optimal value of $n$ for any given value of the step-size parameter $\alpha$. As mentioned earlier, $\alpha>0$  is kept fixed to mimic the setting of \cite{RL_Book}. We run our algorithm for different values of $\alpha$ and initial values of $n$. For the experiments on RW, we observe that for each $\alpha$, the value of $n$ to which our algorithm converges is the same as seen from Fig.~7.2 of \cite{RL_Book}. We use the same set of step-sizes $\alpha$ (i.e., 0.6, 0.4, and 0.2) as in \cite{RL_Book}. However, unlike the procedure in \cite{RL_Book} where the true value function is computed to get the RMSE, we use the TD-error itself to estimate the RMSE,  making our SDPSA algorithm fully incremental and model-free.

% \subsection{Experiment using alpha as 0.6}
\begin{figure}[b]
\centering
\vspace{-45pt}
  \subfloat[]{
	\begin{minipage}[c][1\width]{
	   0.5\textwidth}
	   \centering
        \includegraphics[width=0.86\textwidth]{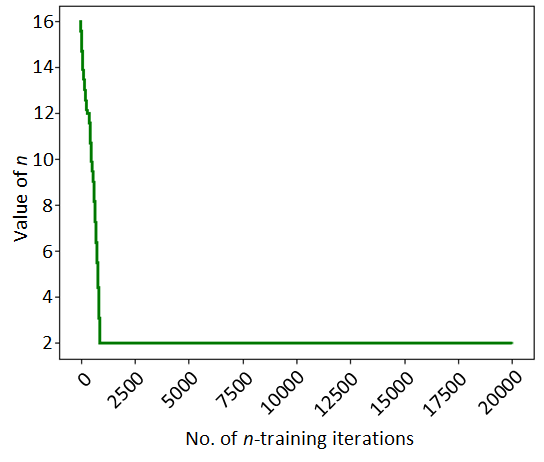}
        \vspace{-50pt}
	\end{minipage}}	
  \subfloat[]{
	\begin{minipage}[c][1\width]{
	   0.5\textwidth}
        \hspace{-30pt}
	   \centering
	   \includegraphics[width=0.86\textwidth]{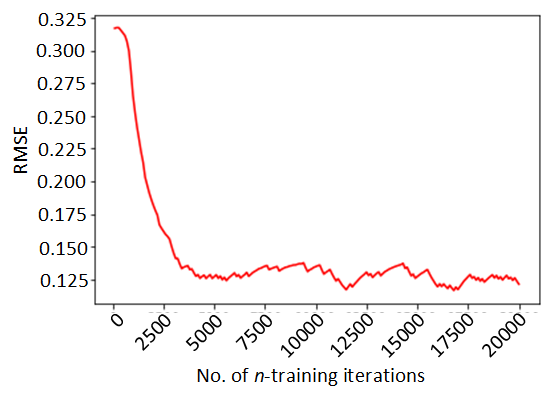}
    \vspace{-60pt}
	\end{minipage}}
 \vspace{-10pt}
\caption{(a) $n$-updates and (b) running RMSE values w.r.t. number of iterations, with initial $n=16$ and $\alpha=0.6$ on RW}.
\label{n16_pt6}
\vspace{-10pt}
\end{figure}

\begin{figure}[htb]
\centering
\vspace{-35pt}
  \subfloat[]{
	\begin{minipage}[c][1\width]{
	   0.5\textwidth}
	   \centering
        \includegraphics[width=0.86\textwidth]{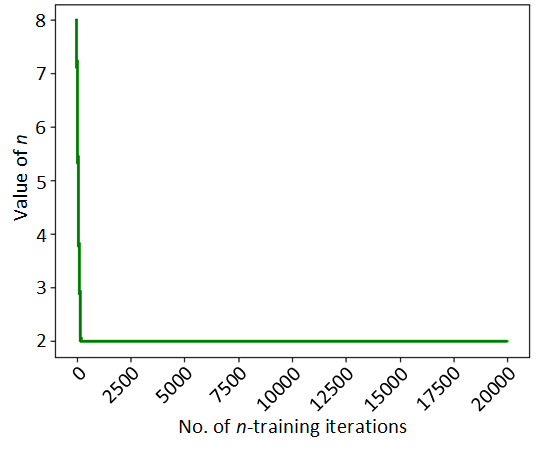}
        \vspace{-50pt}
	\end{minipage}}
 % \hfill 	
  \subfloat[]{
	\begin{minipage}[c][1\width]{
	   0.5\textwidth}
        \hspace{-30pt}
	   \centering
	   \includegraphics[width=0.86\textwidth]{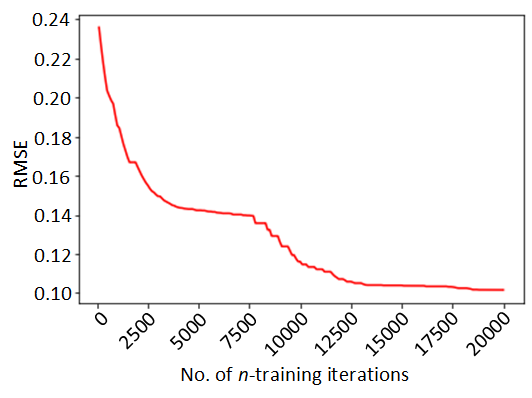}
    \vspace{-60pt}
	\end{minipage}}
 \vspace{-10pt}
 \caption{(a) $n$-updates and (b) running RMSE values w.r.t. number of iterations, with initial $n=8$ and $\alpha=0.6$ on RW}.
\label{n8_pt6}
% \vspace{-5pt}
\end{figure}

\begin{figure}[htb]
\centering
\vspace{-45pt}
  \subfloat[]{
	\begin{minipage}[c][1\width]{
	   0.5\textwidth}
	   \centering
        \includegraphics[width=0.86\textwidth]{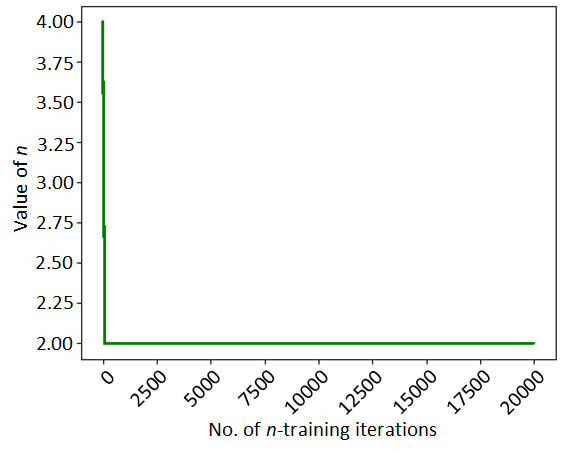}
        \vspace{-50pt}
	\end{minipage}}
 % \hfill 	
  \subfloat[]{
	\begin{minipage}[c][1\width]{
	   0.5\textwidth}
        \hspace{-30pt}
	   \centering
	   \includegraphics[width=0.86\textwidth]{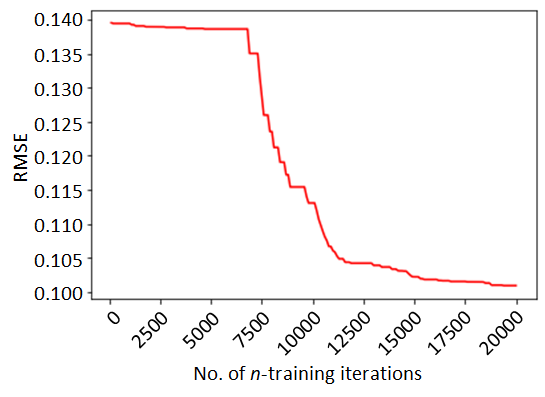}
    \vspace{-60pt}
	\end{minipage}}
 \vspace{-10pt}
 \caption{(a) $n$-updates and (b) running RMSE values w.r.t. number of iterations, with initial $n=4$ and $\alpha=0.6$ on RW}.
\label{n4_pt6}
\vspace{-10pt}
\end{figure}
% \subsection{Experiment using alpha as 0.4}
\begin{figure}[htb]
\centering
\vspace{-35pt}
  \subfloat[]{
	\begin{minipage}[c][1\width]{
	   0.5\textwidth}
	   \centering
        \includegraphics[width=0.86\textwidth]{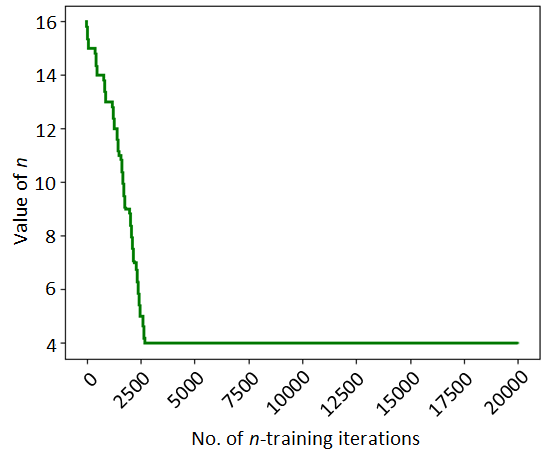}
        \vspace{-50pt}
	\end{minipage}}
 % \hfill 	
  \subfloat[]{
	\begin{minipage}[c][1\width]{
	   0.5\textwidth}
        \hspace{-30pt}
	   \centering
	   \includegraphics[width=0.86\textwidth]{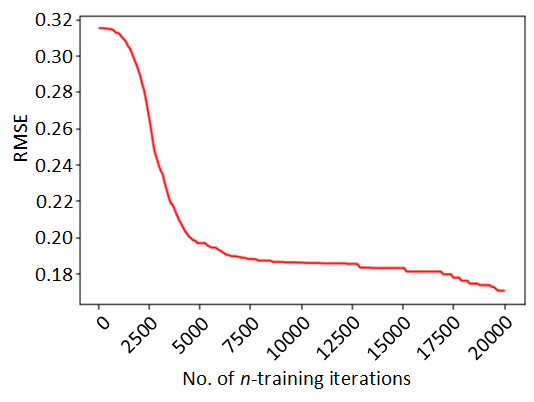}
    \vspace{-60pt}
	\end{minipage}}
 \vspace{-10pt}
 \caption{(a) $n$-updates and (b) running RMSE values w.r.t. number of iterations, with initial $n=16$ and $\alpha=0.4$ on RW}.
\label{n16_pt4}
 \vspace{-20pt}
\end{figure}

\begin{figure}[htb]
\centering
\vspace{-30pt}
  \subfloat[]{
	\begin{minipage}[c][1\width]{
	   0.5\textwidth}
	   \centering
        \includegraphics[width=0.86\textwidth]{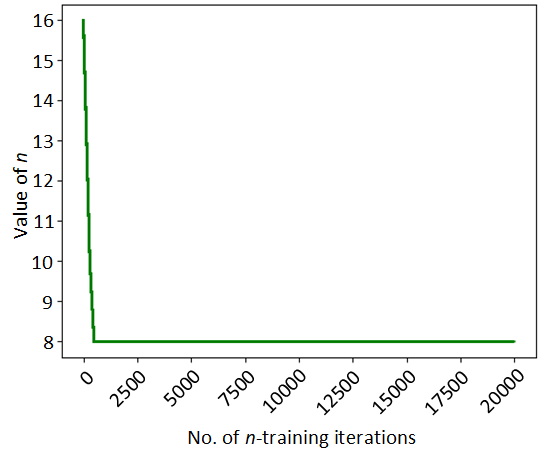}
        \vspace{-50pt}
	\end{minipage}}	 
  \subfloat[]{
	\begin{minipage}[c][1\width]{
	   0.5\textwidth}
        \hspace{-30pt}
	   \centering
	   \includegraphics[width=0.86\textwidth]{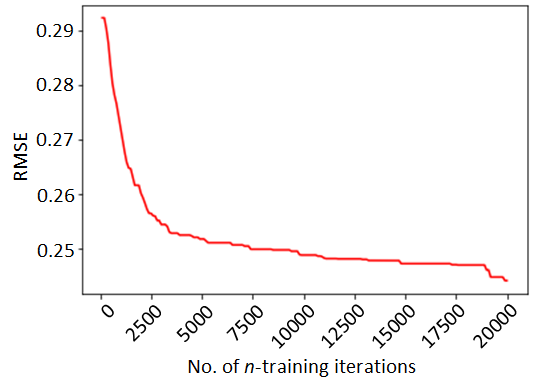}
    \vspace{-50pt}
	\end{minipage}}
\vspace{-10pt}
 \caption{(a) $n$-updates and (b) running RMSE values w.r.t. number of iterations, with initial $n=16$ and $\alpha=0.2$ on RW}.
\label{n16_pt2}
\vspace{-5pt}
\end{figure}
%%%Response to Editor
\begin{figure}[htb]
\centering
\vspace{-40pt}
  \subfloat[]{
	\begin{minipage}[c][1\width]{
	   0.5\textwidth}
	   \centering
        \includegraphics[width=0.86\textwidth]{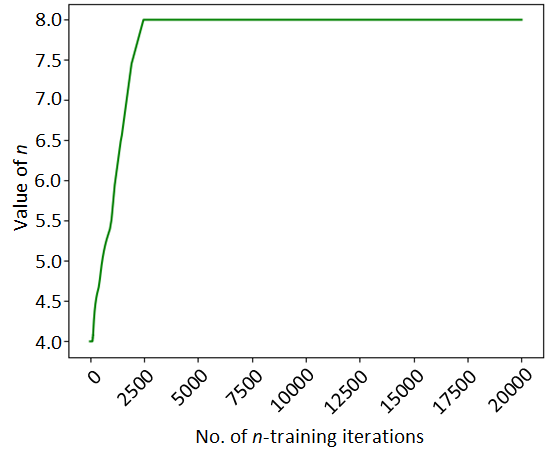}
        \vspace{-50pt}
	\end{minipage}}	 
  \subfloat[]{
	\begin{minipage}[c][1\width]{
	   0.5\textwidth}
        \hspace{-30pt}
	   \centering
	   \includegraphics[width=0.86\textwidth]{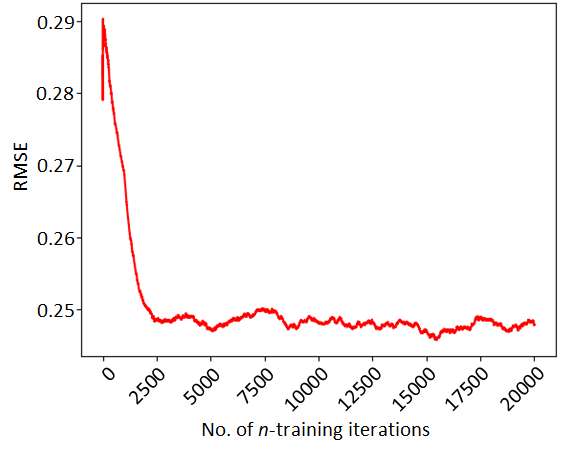}
    \vspace{-50pt}
	\end{minipage}}
 \vspace{-10pt}
 \caption{(a) $n$-updates and (b) running RMSE values w.r.t. number of iterations, with initial $n=4$ and $\alpha=0.2$ on RW}.
\label{n4_pt2}
\vspace{-5pt}
\end{figure}

\begin{figure}[t]
\centering
\vspace{-40pt}
  \subfloat[]{
	\begin{minipage}[c][1\width]{
	   0.5\textwidth}
	   \centering
        \includegraphics[width=0.86\textwidth]{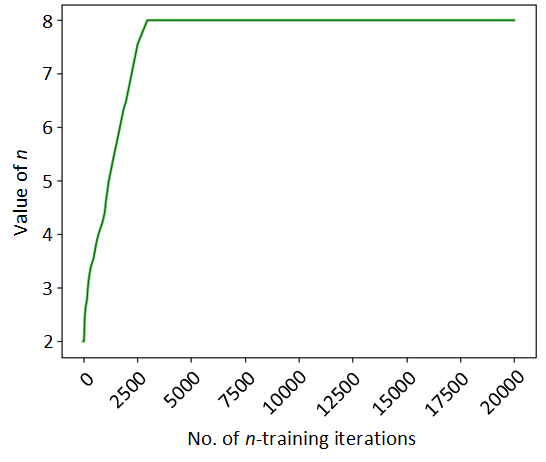}
        \vspace{-50pt}
	\end{minipage}}	 
  \subfloat[]{
	\begin{minipage}[c][1\width]{
	   0.5\textwidth}
        \hspace{-30pt}
	   \centering
	   \includegraphics[width=0.86\textwidth]{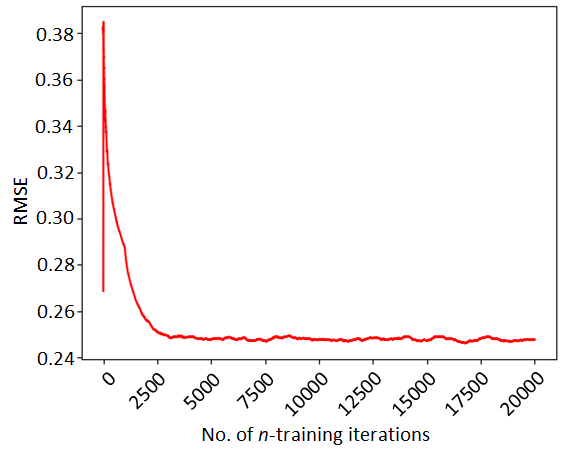}
    \vspace{-50pt}
	\end{minipage}}
  \vspace{-10pt}
 \caption{(a) $n$-updates and (b) running RMSE values w.r.t. number of iterations, with initial $n=2$ and $\alpha=0.2$ on RW}.
\label{n2_pt2}
% \vspace{-10pt}
\end{figure}

%%%%%Res on Environment 2
% \subsection{Experiment using alpha as 0.6 on GW }
\begin{figure}[htb]
\centering
\vspace{-42pt}
  \subfloat[]{
	\begin{minipage}[c][1\width]{
	   0.5\textwidth}
	   \centering
        \includegraphics[width=0.92\textwidth]{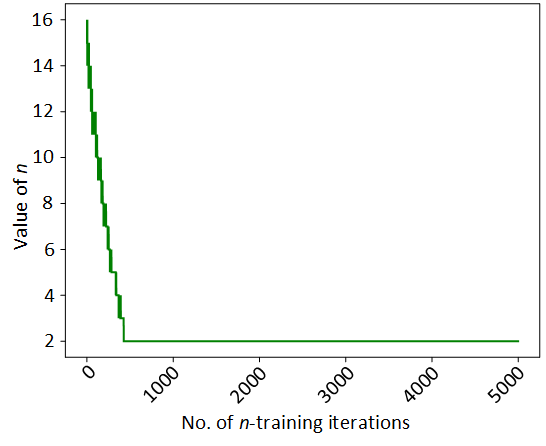}
        \vspace{-50pt}
	\end{minipage}}	
  \subfloat[]{
	\begin{minipage}[c][1\width]{
	   0.5\textwidth}
        \hspace{-30pt}
	   \centering
	   \includegraphics[width=0.92\textwidth]{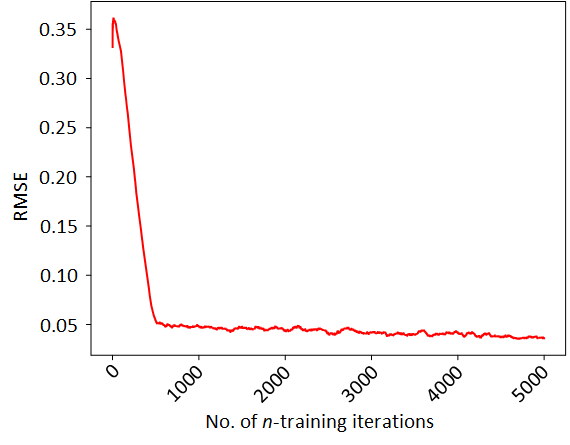}
    \vspace{-60pt}
	\end{minipage}}
 \vspace{-3pt}
\caption{(a) $n$-updates and (b) running RMSE values w.r.t. number of iterations, with initial $n=16$ and $\alpha=0.6$ on GW.}
\label{n16_pt6_app2}
% \vspace{-5pt}
\end{figure}
 %initial n8
 \begin{figure}[htb]
\centering
\vspace{-50pt}
  \subfloat[]{
	\begin{minipage}[c][1\width]{
	   0.5\textwidth}
    \hspace{-20pt}
	   \centering
        \includegraphics[width=0.92\textwidth]{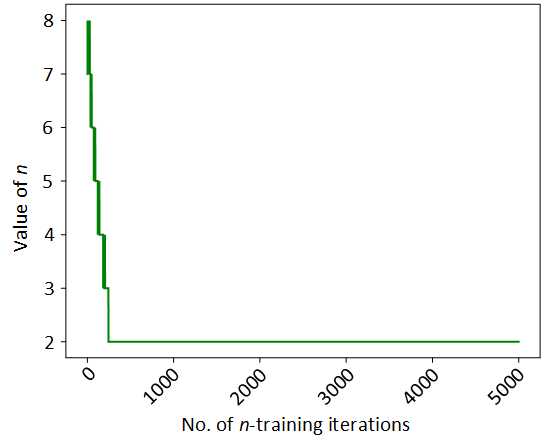}
        \vspace{-50pt}
	\end{minipage}}	
  \subfloat[]{
	\begin{minipage}[c][1\width]{
	   0.5\textwidth}
        \hspace{-30pt}
	   \centering
	   \includegraphics[width=0.92\textwidth]{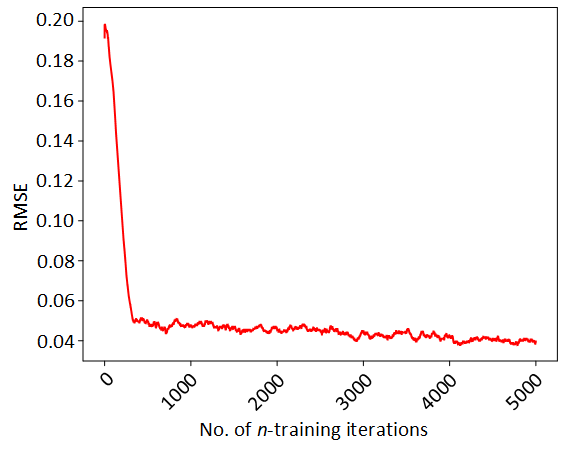}
    \vspace{-50pt}
	\end{minipage}}
 \vspace{-10pt}
\caption{{(a) $n$-updates and (b) running RMSE values w.r.t. number of iterations, with initial $n=8$ and $\alpha=0.6$ on GW.}}
\label{n8_pt6_app2}
% \vspace{-10pt}
\end{figure}

% \subsection{Experiment using alpha as 0.4 on App2 }
\begin{figure}[htb]
\centering
\vspace{-40pt}
  \subfloat[]{
	\begin{minipage}[c][1\width]{
	   0.5\textwidth}
    \hspace{-20pt}
	   \centering
        \includegraphics[width=0.92\textwidth]{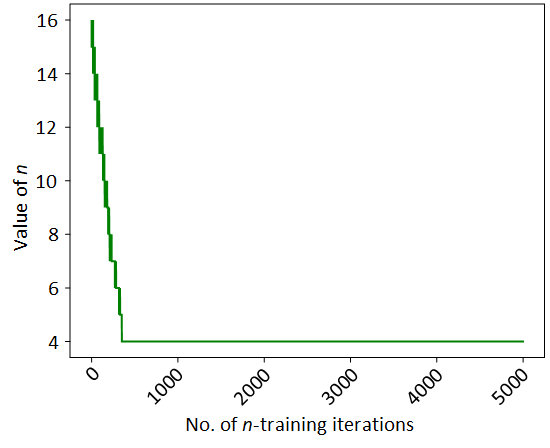}
        \vspace{-50pt}
	\end{minipage}}	
  \subfloat[]{
	\begin{minipage}[c][1\width]{
	   0.5\textwidth}
        \hspace{-30pt}
	   \centering
	   \includegraphics[width=0.92\textwidth]{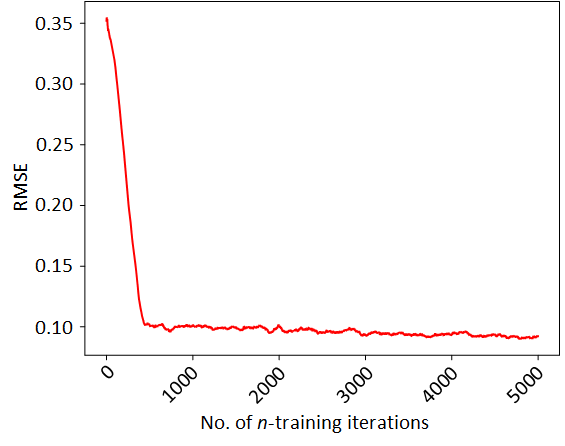}
    \vspace{-50pt}
	\end{minipage}}
 \vspace{-10pt}
\caption{{(a) $n$-updates and (b) running RMSE values w.r.t. number of iterations, with initial $n=16$ and $\alpha=0.4$ on GW.}}
\label{n16_pt4_app2}
% \vspace{-10pt}
\end{figure}
 %initial n2
 \begin{figure}[htb]
\centering
\vspace{-45pt}
  \subfloat[]{
	\begin{minipage}[c][1\width]{
	   0.5\textwidth}
    \hspace{-20pt}
	   \centering
        \includegraphics[width=0.92\textwidth]{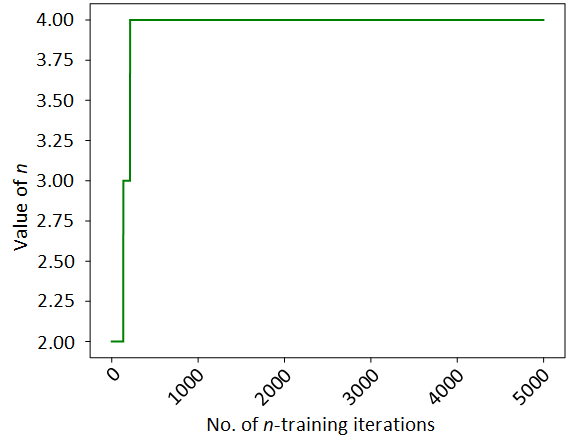}
        \vspace{-50pt}
	\end{minipage}}	
  \subfloat[]{
	\begin{minipage}[c][1\width]{
	   0.5\textwidth}
        \hspace{-30pt}
	   \centering
	   \includegraphics[width=0.92\textwidth]{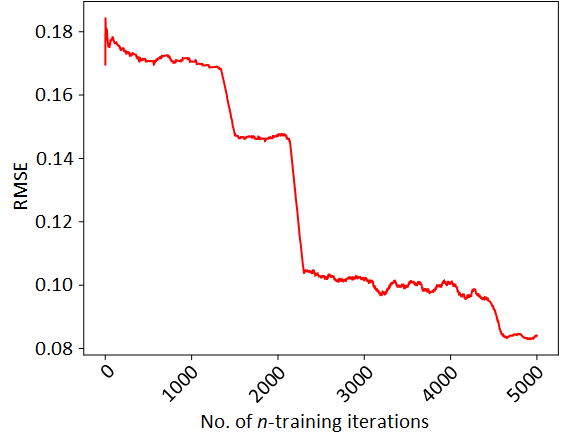}
    \vspace{-50pt}
	\end{minipage}}
 \vspace{-10pt}
\caption{{(a) $n$-updates and (b) running RMSE values w.r.t. number of iterations, with initial $n=2$ and $\alpha=0.4$ on GW.}}
\label{n8_pt4_app2}
% \vspace{-10pt}
\end{figure}

% \subsection{Experiment using alpha as 0.2 on App2 }
\begin{figure}[htb]
\centering
\vspace{-45pt}
  \subfloat[]{
	\begin{minipage}[c][1\width]{
	   0.5\textwidth}
    \hspace{-20pt}
	   \centering
        \includegraphics[width=0.92\textwidth]{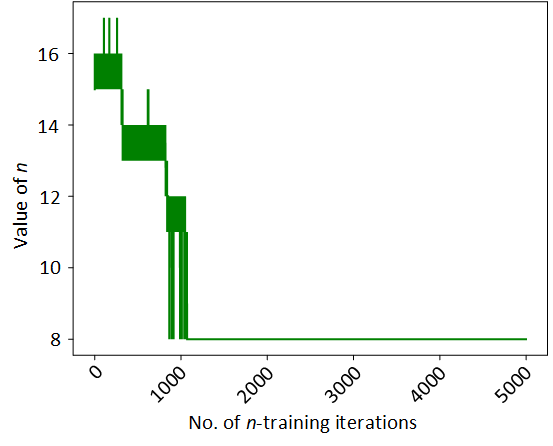}
        \vspace{-50pt}
	\end{minipage}}	
  \subfloat[]{
	\begin{minipage}[c][1\width]{
	   0.5\textwidth}
        \hspace{-30pt}
	   \centering
	   \includegraphics[width=0.92\textwidth]{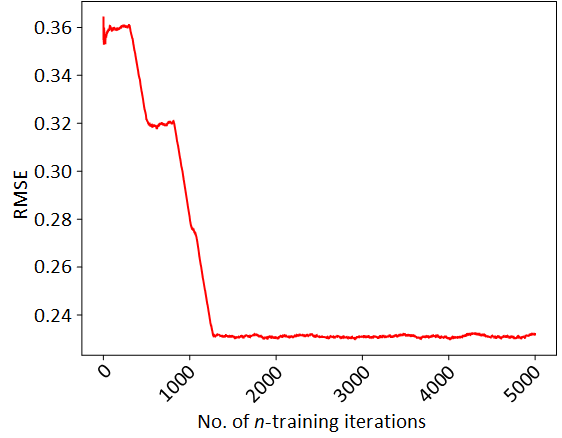}
    \vspace{-50pt}
	\end{minipage}}
 \vspace{-10pt}
\caption{{(a) $n$-updates and (b) running RMSE values w.r.t. number of iterations, with initial $n=16$ and $\alpha=0.2$ on GW.}}
\label{n16_pt2_app2}
% \vspace{-10pt}
\end{figure}
 %initial n4
 \begin{figure}[htb]
\centering
\vspace{-45pt}
  \subfloat[]{
	\begin{minipage}[c][1\width]{
	   0.5\textwidth}
    \hspace{-20pt}
	   \centering
        \includegraphics[width=0.92\textwidth]{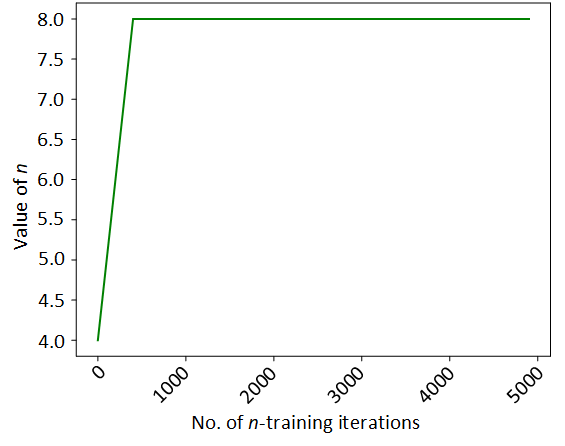}
        \vspace{-50pt}
	\end{minipage}}	
  \subfloat[]{
	\begin{minipage}[c][1\width]{
	   0.5\textwidth}
        \hspace{-30pt}
	   \centering
	   \includegraphics[width=0.92\textwidth]{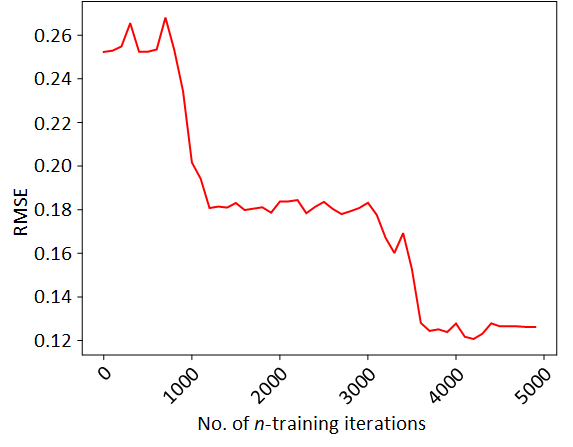}
    \vspace{-50pt}
	\end{minipage}}
 \vspace{-3pt}
\caption{(a)$n$-updates and (b)running RMSE values w.r.t. number of iterations, with initial $n=4$ and $\alpha=0.2$ on GW.}
\label{n4_pt2_app2}
% \vspace{-10pt}
\end{figure}

For each experiment, we fix a value of $\alpha$ to use in the TD($n$) algorithm and run the SDPSA algorithm to find the optimal value of $n$ for an arbitrary initial value of the same. Results of nine experiments (labeled $\#1$ to $\#9$) on two different benchmark RL environments (RW and GW) are shown here.  
In the first three experiments ($\#1$ to $\#3$), 
$\alpha=0.6$, and the initial value of $n$ is set to $16, 8$ and $4$, respectively. In the next three experiments (i.e., $\#4$ to $\#6$), $\alpha=0.4$ and the initial value of $n=16, 8, 2$, respectively. Finally, for the last three experiments (i.e., $\#7$ to $\#9$), $\alpha=0.2$ and the initial value of $n=16,4,2$, respectively. 

In each experiment, the update of $n$ is carried out 20,000/5,000 times on RW/GW (using Algorithm \ref{SPSA}), and in between any two successive updates of $n$, Algorithm \ref{n_step_TD} is run for ten episodes in order to estimate the root mean-squared error (RMSE). The RMSE for all the plots is obtained in this manner. The parameters of SDPSA, i.e., $\delta$ and the step-size $\nu$ of the slower timescale recursion (that is used to update $n$) are tuned in each experiment to achieve faster convergence. Depending on the experiment, we set the value of $\delta$ between 0.06 and 0.6, and the initial value of $\nu$ is set between 0.05 and 0.2, and it is slowly diminished after a certain number of iterations so that it finally converges to a small positive constant that continues to be below $\alpha$. Note here again that the value of $\nu$ is chosen to be smaller than $\alpha$ in each experiment to retain the two-timescale effect.

\begin{table}[t]
% \vspace{-2pt}
\centering
\renewcommand{\arraystretch}{2}
\caption{Results: Converged $n$ and RMSE on RW and GW.}
\label{tbl:result_RMS}
% \vspace{-15pt}
\scalebox{0.75}{
\begin{tabular}{|c|c|c|c|c|c|}
\cline{1-6}
\textbf{Exp.}  & \textbf{Initial $n$} & \textbf{$\alpha$} & \textbf{Converged $n$} & \textbf{Minimal RMSE} & \textbf{Converged RMSE } \\ & {RW / GW} &  {RW / GW} &  {RW / GW}  & in \cite{RL_Book} RW &  \textbf{{(SDPSA) RW / GW}} \\ \cline{1-6}
 \textbf{$\#1$} & 16 & 0.6  & \textbf{2} &  0.27 &  \textbf{{0.12 / 0.05}}    \\ \cline{1-6}
\textbf{$\#2$} & 8 &  0.6 & \textbf{2} &  0.27  &  \textbf{{0.10 / 0.04}}  \\ \cline{1-6}
\textbf{$\#3$} & 4 &  0.6 & \textbf{2}  &  0.27 &   \textbf{{0.10 / 0.04}}   \\ \cline{1-6}
\textbf{$\#4$} & 16 & 0.4  & \textbf{4} &  0.26 &  \textbf{{0.17 / 0.10}}    \\ \cline{1-6}
\textbf{$\#5$} & 8 &  0.4 & \textbf{4}  &  0.26 &  \textbf{{0.17 / 0.10}}   \\ \cline{1-6}
\textbf{$\#6$} & 2 &  0.4 & \textbf{4} &  0.26  &  \textbf{{0.17 / 0.08}}  \\ \cline{1-6}
\textbf{$\#7$} & 16 & 0.2  & \textbf{8} &  0.28 &  \textbf{{0.25 / 0.24}}    \\ \cline{1-6}
\textbf{$\#8$} & 4 &  0.2 & \textbf{8} & 0.28  &  \textbf{{0.25 / 0.12}}  \\ \cline{1-6}
\textbf{$\#9$} & 2 &  0.2 & \textbf{8}  &  0.28 &  \textbf{{0.24 / 0.12}}  \\ \cline{1-6}
\end{tabular}}
% \vspace{-8pt}
\end{table}
% \vspace{-20pt}
\begin{figure}[htb]
    \centering
    \includegraphics[width=4 in]{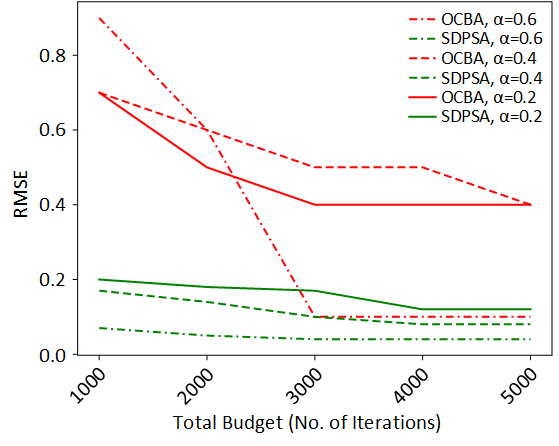}
    \caption{RMSE values w.r.t. computation budget of OCBA and SDPSA with fixed $\alpha$ on GW.}
    \label{fig:OCBA_SDPSA}
    % \vspace{-5pt}
\end{figure}
\begin{table}[t]
% \vspace{-2pt}
\centering
\renewcommand{\arraystretch}{2}
\caption{Comparison Results of OCBA and SDPSA on GW.}
\label{tbl:OCBA_SDPSA}
\scalebox{1}{
\begin{tabular}{|c|c|c|c|c|}
\hline
\multirow{3}{*}{{$\alpha$ in $n$-step TD}} & \multicolumn{2}{c|}{{Time (Sec.)}} & \multicolumn{2}{c|}{{RMSE}} \\ \cline{2-5} 
                          & {OCBA}    & {SDPSA}    & {OCBA}    & {SDPSA}    \\ \hline
{0.6}                     & {588} & \textbf{{452}} & {0.10} & \textbf{{0.04}} \\ \hline
{0.4}                     & {610} & \textbf{{405}} & {0.40}  & \textbf{{0.08}} \\ \hline
{0.2}                    & {571} & \textbf{{490}} & {0.40}  & \textbf{{0.12}} \\ \hline
\end{tabular}}
\end{table}

The objective of our experiments using the SDPSA algorithm is to reach the optimal point during each run of the algorithm. All experimental results obtained from experiments $\#1$ to $\#9$ on RW and GW environments are summarized in Table \ref{tbl:result_RMS}.  Further, results of experiments $\#1$ to $\#3$ on RW are depicted graphically in plots (see Figs. \ref{n16_pt6} to \ref{n4_pt6}), where each figure has two sub-plots (a) and (b).
Here sub-plot (a) in each figure is a plot of the projected value of $n$ as a function of number of training iterations while sub-plot (b) is a plot of the running RMSE associated with the $n$-update again as a function of the number of training iterations.
% In each of the sub-plots (a) and (b), the $x$-axis represents the number of iterations utilized for the $n$-update. On the other hand, in sub-plots (a), the $y$-axis gives the figure, sub-plot (a) is a plot of 
% corresponding value of $n$ obtained from SDPSA while in sub-plots (b), the $y$-axis represents the running RMSE associated with the $n$-update and is calculated as explained previously. 
From Figs. \ref{n16_pt6}(a), \ref{n8_pt6}(a), and \ref{n4_pt6}(a), it can be seen that despite having considerably different initial values as 16, 8, and 4, respectively, the $n$-update obtained from our algorithm converges to 2 in a finite number of iterations in all cases. The plots in Figs. \ref{n16_pt6}(b), \ref{n8_pt6}(b), and \ref{n4_pt6}(b) show that the RMSE decreases and converges with the number of updates of $n$ obtained from our algorithm (see Table \ref{tbl:result_RMS}). 

In Table \ref{tbl:result_RMS}, the last column and the second last column present the values of the converged RMSE using our proposed algorithm and the minimal RMSE using the existing algorithm in \cite{RL_Book}, respectively, which shows our algorithm achieves better performance in terms of RMSE. Further, the plot in Fig. \ref{n16_pt4}(a) shows that the initial value of $n$ is 16 (the value of $\alpha$ is 0.4) and the value of $n$ converges to 4, after a finite number of iterations of the $n$-update. Similar results follow for other initial values of $n$ and $\alpha=0.4$ (see Table \ref{tbl:result_RMS}). Fig. \ref{n16_pt4}(b) shows that the RMSE decreases as the number of $n$ updates increases and eventually converges. Similar results are seen for other initial values of $n$ and $\alpha=0.4$ (see Table \ref{tbl:result_RMS}).

From Figs. \ref{n16_pt2}(a), \ref{n4_pt2}(a), and \ref{n2_pt2}(a), it can be seen that initial values of $n$ are 16, 4, and 2, respectively, but the value of $n$ converges to 8 in a finite number of iterations in all cases. The plots in Figs. \ref{n16_pt2}(b), \ref{n4_pt2}(b), and \ref{n2_pt2}(b) show that the RMSE decreases and converges with the number of updates of $n$ obtained from our algorithm (see Table \ref{tbl:result_RMS}). {We present results obtained from the GW environment of experiments $\#1$, $\#2$, $\#4$, $\#6$, $\#7$ and $\#8$ (as representative experiments) in Figs. \ref{n16_pt6_app2} to \ref{n4_pt2_app2}, where parts (a) and (b) correspond to similar plots as for the RW environment (mentioned above). The experiments on GW achieve similar results as in RW and confirm the convergence and stability of our proposed algorithm across different environments.  Note that, as projected values of $n$, see \eqref{eq:projection}, are needed for the $n$-step-TD algorithm (i.e., Algorithm \ref{n_step_TD}), we report the same as the value of $n$ in the $4^{\text{th}}$ column of Table \ref{tbl:result_RMS} and all sub-plots denoted by $(a)$.}

{Table \ref{tbl:result_RMS} provides the numerical values of the performance indices obtained using the SDPSA algorithm on two different RL benchmark tasks.} These results confirm that our proposed SDPSA algorithm is effective in achieving optimal parameter values fast, starting from any arbitrary initial value of the same. {Moreover, our experimental results demonstrate the stability of the proposed algorithm on different RL benchmark environments.}

{In our final set of experiments, we present empirical comparisons with the Optimal Computing Budget Allocation (OCBA) algorithm \cite{chenlee} on the GW setting. As mentioned earlier, OCBA is a ranking and selection procedure that is widely recognised as the best algorithm in the case of discrete parameter spaces of moderate size as it judiciously allocates the simulation budget to the various parameters. Figure \ref{fig:OCBA_SDPSA} and Table \ref{tbl:OCBA_SDPSA} present comparison results of OCBA with our algorithm SDPSA. Here, we arbitrarily select experiments $\#3$, $\#6$, and $\#9$ as our representative experiments for comparison, for which we also implement the OCBA algorithm. In Figure~\ref{fig:OCBA_SDPSA}, we show plots of the RMSE for SDPSA and OCBA as a function of the total simulation budget for various values of the step-size parameter $\alpha$ of the TD($n$) scheme. Table~\ref{tbl:OCBA_SDPSA} shows the values of RMSE and total computational time taken by the two algorithms at the end of the 5,000 iterations.
Both Figure \ref{fig:OCBA_SDPSA} and Table \ref{tbl:OCBA_SDPSA} demonstrate that our proposed algorithm outperforms the existing OCBA algorithm in terms of execution time and RMSE; the proposed algorithm takes lower execution time and achieves lower error in all the considered experiments.}
 
\section{Conclusions}
\label{conclusion}

 In this paper, we considered the problem of finding the optimal value of $n$ in $n$-step TD for given choice of the learning rate parameters and proposed a discrete optimization procedure for the same. We presented a one-simulation deterministic perturbation SPSA algorithm for the case of discrete parameter optimization under noisy data. 
Our algorithm incorporates two timescales where along the faster scale, the regular $n$-step TD recursion is run for a given value of $n$ that in-turn gets updated on a slower timescale. We presented an analysis of the convergence of our proposed SDPSA algorithm using a differential inclusions-based approach and showed that the algorithm converges asymptotically to the optimal $n$ that minimizes the estimator variance. Our experiments effectively demonstrate the efficacy of our proposed algorithm and show that it beats the state-of-the-art OCBA algorithm, for discrete parameter stochastic  optimization, on benchmark RL tasks.

As future work, for the case of constant step-sizes $\alpha$ and $\nu$, one may design an algorithm for the hybrid problem of finding the pair of optimal such step-sizes $\alpha$ and $\nu$ as well as the parameter $n$, the combination of which would give the best performance. Note however that our convergence analysis has been shown for the case of diminishing step-sizes $\alpha$ and $\nu$. The same will have to be carried out for the case of constant step-sizes $\alpha$ and $\nu$ respectively. 
Finally, it would be of interest to obtain concentration bounds for set-valued maps and obtain finite-time bounds for our algorithm.

%In this case, however, one will need a combination of continuous (for $\alpha$ updates) and discrete (for $n$ updates) optimization techniques.

%{Further, an interesting future direction would be to explore the finite time analysis of our algorithm as in \cite{bhandari18a,Srikant2019} currently applied to single time-scale linear algorithms in the multi-timescale setting of our paper.}

% This section provides asymptotic convergence of the proposed two time-scale algorithm unlike  as existing convergence results of linear function approximation are not directly applicable to our algorithm.}

% As as a future work, we would like to explore other optimization techniques and their theoretical guarantees to find optimal value of $n$.

%\clearpage
\bibliographystyle{IEEEtran}
\bibliography{n_stepTD}

\end{document}